\newcommand{\bb}{\mathbb}
\newcommand{\Cal}{\mathcal}
\newtheorem{thm}{Theorem}
\newtheorem{pro}{Proposition}
\newtheorem{rem}{Remark}
\newtheorem{appxthm}{Theorem}[section]
\newtheorem{appxpro}[appxthm]{Proposition}
\newtheorem{appxlem}[appxthm]{Lemma}
\newtheorem{assumption}{Assumption}
\newtheorem{example}{Example}
 \newcommand{\iprod}[1]{\left \langle #1 \right \rangle}
\newcommand{\bX}{\mathbf{X}}
\newcommand{\tf}{f_{| \bX}}
\newcommand{\sx}{S_{\bm{X}}}
\newcommand{\sxs}{S^*_{\bm{X}}}
\newcommand{\hs}{\mathcal{L}_2(\mathcal{H})}
\title{Gaussian Sketching yields a J-L Lemma in RKHS}
\author[1]{Samory Kpotufe\thanks{skk2175@columbia.edu}}
\author[2]{Bharath K. Sriperumbudur\thanks{bks18@psu.edu}}
\affil[1]{Department of Statistics, Columbia University}
\affil[2]{Department of Statistics, Pennsylvania State University}
\date{}
\begin{document}

\maketitle

\begin{abstract}
The main contribution of the paper is to show that Gaussian sketching of a kernel-Gram matrix $\bm K$ yields an operator whose 
counterpart in an RKHS $\cal H$, is a \emph{random projection} 
operator---in the spirit of Johnson-Lindenstrauss (J-L) lemma. To be precise, given a random matrix $Z$ with i.i.d. Gaussian entries, we show that a sketch $Z\bm{K}$ corresponds to a particular random operator in (infinite-dimensional) Hilbert space $\cal H$ that maps functions $f \in \cal H$ to a low-dimensional space $\bb R^d$, while preserving a weighted RKHS inner-product of the form $\langle f, g \rangle_{\Sigma} \doteq \langle f, \Sigma^3 g \rangle_{\cal H}$, where $\Sigma$ is the \emph{covariance} operator induced by the data distribution. In particular, under similar assumptions as in kernel PCA (KPCA), or kernel $k$-means (K-$k$-means), well-separated subsets of feature-space $\{K(\cdot, x): x \in \cal X\}$ remain well-separated after such operation, which suggests similar benefits as in KPCA and/or K-$k$-means, albeit at the much cheaper cost of a random projection. In particular, our convergence rates suggest that, given a large dataset $\{X_i\}_{i=1}^N$ of size $N$, we can build the Gram matrix $\bm K$ on a much smaller subsample of size $n\ll N$, so that the sketch $Z\bm K$ is very cheap to obtain and subsequently apply as a projection operator on the original data $\{X_i\}_{i=1}^N$.  
We verify these insights empirically on synthetic data, and on real-world clustering applications.
\end{abstract}

\section{Introduction}\label{Sec:Introduction}
The Gram matrix $\bm K$, defined as ${\bm K}_{ij} = K(X_i, X_j)$ over a (sub) sample $\bm X \doteq \{X_i\}_{i=1}^n$, for a PSD kernel
$K\!:\cal X \times \cal X \to \bb R$, plays a central role in \emph{kernel machines}, where learning tasks in a (reproducing kernel) Hilbert space $\cal H$ can be performed in sample space $\cal X$ via $\bm K$. Sketching of $\bm K$, i.e., multiplying by a random matrix (or matrices) $Z\in {\bb R}^{d \times n}$---as a form of rank reduction, is now ubiquitous in the design of computationally efficient approaches to kernel machines \citep{k-means_nys, Williams-01,Yang-17}. The simplest sketching approach consists of random subsampling of columns of ${\bm K}$, i.e., a data reduction, while the usual alternative of a Gaussian sketch (of the form $Z{\bm K}$, $Z_{i, j} {\sim} \mathcal{N}(0, 1)$) has less immediate interpretation. A main aim of this paper is to derive an operator-theoretic interpretation of Gaussian sketching, i.e., understand its effect in kernel space $\cal H$ on embedded data $K(x, \cdot)$. 
The analysis reveals interesting norm preservation properties of $Z{\bm K}$, in the spirit of the Johnson-Linderstauss (J-L) lemma, even when ${\bm K}$ is viewed as a smaller submatrix of an initial gram-matrix ${\bm K}_N$ on $N\gg n$ samples; these new insights imply  an alternative use of Gaussian sketching in important applications such as kernel clustering or PCA, while yielding faster preprocessing than even vanilla Nystr\"om. 

\paragraph{Results Overview.}

It has been folklore in the community that Gaussian sketching corresponds to some form of \emph{random projection}, although it remained unclear in which formal sense this is true. To draw the link to operators on $\cal H$, we consider linear operations of the form $Z {\bm K}f_{| \bm X}\in\bb{R}^d$, where $f_{| \bm X} \doteq (f(X_1), \ldots, f(X_n))^\top$ denotes the \emph{sampled} version of $f \in \cal H$. We show that, $Z {\bm K}$, viewed in this sense as an operator, corresponds to a \emph{random} operator $\Theta$ which maps (potentially infinite-dimensional) $\cal H$ to lower-dimensional ${\bb R}^d$, while preserving---in the spirit of the J-L lemma \citep{Johnson-84, dasgupta2003elementary}---a weighted RKHS inner-product of the form $\langle f, g \rangle_{\Sigma} \doteq \langle f, \Sigma^3 g \rangle_{\cal H}$, $\Sigma$ being the \emph{covariance} operator induced by the data-generating distribution (as defined in Section~\ref{Sec:Notation}).
%{\color{red} (see definition in Section \ref{}}). 

The corresponding random operator $\Theta$ \emph{projects}---in the informal sense of dimension reduction---any $f\in \cal H$ onto $d$ i.i.d Gaussian directions\footnote{The notion of a Gaussian measure ${\cal N}_{\cal H}$ on $\cal H$ has to be suitably defined so as to ensure that random draws $v\sim {\cal N}_{\cal H}$ are indeed elements of $\cal H$.} 
$\{v_i\}_{i =1}^d$ in $\cal H$: formally, given i.i.d. Gaussians $\{v_i\}_{i=1}^d \sim \mathcal{N}_{\cal H}(0, \Sigma^3)$, 
$\Theta$ maps any $f\in \cal H$ to the vector $\frac{1}{\sqrt{d}}(\iprod{f, v_1}_{\cal H}, \ldots, \iprod{f, v_d}_{\cal H})^\top \in {\bb R}^d.$ We refer the reader to Section~\ref{Sec:algorithm} for details.

In Section~\ref{Sec:results} (see Theorem \ref{Thm:main}), we show the following correspondence between $Z \bm K$ and $\Theta$:
just as $\Theta$ preserves 
$\iprod{g, \Sigma^3 f}_{\cal H}$, %i.e., with high-probability $\iprod{\Theta g, \Theta f} \approx \iprod{g, \Sigma^3 f}_{\cal H}$, 
so does $Z \bm K$ (properly normalized), i.e., with high probability, we have $\forall f, g \in \cal H$, 
\begin{equation}\frac{1}{n^3 d}\iprod{(Z \bm K) g_{| \bm X},  (Z \bm K) f_{| \bm X}}_2 \approx \iprod{g, \Sigma^3 f}_{\cal H}\approx \iprod{\Theta g, \Theta f}_2,\end{equation}
%{\color{red} Furthermore, under regularity conditions on the kernel $K$, we show that, with high-probability over the sampling of both-operators, they convergence in \emph{weak-topology} to each other (in the Banach space 
%${\cal L}\left ({\cal H}, {\bb R}^d\right)$ of bounded operators).}  
where $\iprod{\cdot,\cdot}_2$ denotes the inner-product in ${\bb R}^d$.
The result holds simultaneously $\forall f, g \in \cal H$, with an approximation rate of order $n^{-1/2} + d^{-1/2}$, for $n, d$ greater than \emph{effective dimension} terms ($s_\Sigma$ or $s_{\Sigma^3}$ of Theorem \ref{Thm:main}). 
In other words such approximation holds for both $n$ and $d$ small, whenever the \emph{effective dimension}  is small; our experiments suggest this is often the case.

{\it Time complexity.} The above result suggests a novel use of sketching where, given a larger dataset ${\bm X}_N = \{X_i\}_{i=1}^N$, we re-map 
all $X_i \in {\bm X}_N$ to ${\bb R}^d$ using $n$ subsamples $\bm X\subset {\bm X}_N$, $n, d \ll N$, to form a 
\emph{projection} operator $Z{\bm K}$. In other words, we re-map feature functions $K(\cdot, X_i), X_i\in {\bm X}_N$ to 
$\frac{1}{n^{3/2} \sqrt{d} }(Z \bm K) K(\cdot, X_i)_{| \bm X}$, following the intuition that useful properties of 
kernel feature maps $K(\cdot, x)$ are preserved. We refer to such a mapping as Kernel JL (K-JL) for short. The time complexity is exactly $d\cdot n^2$ for forming $Z\bm K$, in addition to $N\cdot d\cdot n$ for the subsequent mapping of all $N$ datapoints. The leading constant is $1$ in all cases. In contrast, the cheapest Nystr\"om approximation using $n$ subsampled columns costs $O(d\cdot n^2)$ 
(for pseudo-inverse computation, where constants depend on desired precision) plus $N\cdot d\cdot n$ for mapping datapoints. K-JL avoids eigen-decompositions or matrix inversion steps, besides requiring smaller $n$ for stability (see Section \ref{sec:KPCA}, for details including Nystr\"om formulation).

{\it Performance.} Now, whether K-JL preserves useful properties of feature mapping depends on how the inner-product 
$\iprod{g, \Sigma^3 f}_{\cal H}$ relates to the natural inner-product $\iprod{g, f}_{\cal H}$ of the RKHS $\cal H$. In the present work, we consider clustering and PCA applications, which require that properties such as separation (in $\cal H$ distance) between given subsets of \emph{feature space} $\{K(\cdot, x): x\in \cal X\} \subset \cal H$ are preserved. At first glance, there seems to be little hope, since in the worst-case over $\cal H$, there exist $f \in \cal H$ such that 
$\|f\|^2_{\cal H} \doteq \iprod{f, f}_{\cal H}$ is large but $\iprod{f, \Sigma^3 f}_{\cal H}$ is close to $0$ (e.g., eigenfunctions $f$ of $\Sigma$ with eigenvalues tending to $0$). 

Interestingly however, as we argue in Section~\ref{sec:KPCA}, 
%\ref{sec:KPCA},  
we can expect well-separated subsets of feature space $\{K(\cdot, x): x \in \cal X\}\subset \cal H$ to remain well-separated after K-JL, under conditions favorable to kernel PCA (KPCA) \citep{Blanchard-07,mika1999kernel,Scholkopf-98}, or conditions favorable to kernel $k$-means (K-$k$-means) \citep{Dhillon-04}. Namely, if feature maps $K(\cdot, x)$ lie close to a low-dimensional subspace, or feature maps \emph{cluster} well (in which case the means of clusters lie close to a low-dimensional subspace), then the worst-case distortions between the two inner-products happen outside of feature space 
$\{K(\cdot, x): x \in \cal X\}$. This entails similar benefits as in KPCA and or K-$k$-means, albeit at the cheaper cost of a random projection. This intuition holds empirically, as we verify on a mix of synthetic data and real-world clustering applications. 

\paragraph{Further Related Work.} We note that \emph{sketching} is of general interest outside the present context, motivated by the need for efficient approximations of general matrices appearing in numerical and data analysis \citep{woodruff2014sketching, andoni2016sketching, andoni2018sketching}. Finally, we note that the benefits of Johnson-Linderstrauss type projections in Hilbert spaces were considered in \cite{biau2008performance}, however under the assumptions of a theoretical procedure which requires explicit Fourier coefficients (basis expansion) of Hilbert space elements $K(X_i, \cdot)$. 

\subsection*{Paper Outline} 

Section~\ref{Sec:Notation} covers definitions and basic assumptions used throughout the paper. In Section \ref{Sec:algorithm} we develop some initial intuition about JL-type \emph{random projections} in $\cal H$, followed by formal results in Section \ref{Sec:results}. Omitted proofs and supporting results are 
%provided in the appendix.
collected in an appendix.

\section{Preliminaries}\label{Sec:Notation}
Let $\Cal{X}$ denote a separable topological space on which a Borel probability measure $\rho_X$ is defined. We assume that $\Cal{H}$, consisting of functions $\cal X \to \bb R$, is a reproducing kernel Hilbert space (RKHS) with a continuous and bounded reproducing kernel $K:\Cal{X}\times\Cal{X}\to\bb{R}$ where $\sup_{x\in\Cal{X}}K(x,x)=:\kappa<\infty$. For any $f \in \cal H$, the \emph{outer-product} notation $f \otimes_\Cal{H} f$ denotes the operator $g\mapsto \iprod{g, f}_{\cal H} f$. We let $\Sigma:\Cal{H}\rightarrow\Cal{H}$ denote the uncentered covariance operator, which is defined as 

$$\Sigma \doteq \int K(\cdot,x){\otimes_\Cal{H}} K(\cdot,x)\,d\rho_X(x),$$ 

in the sense of Bochner integration \citep{Diestel-77}.
Given data $\{X_i\}^n_{i=1}\stackrel{i.i.d.}{\sim}\rho_X$ where $n\ge 1$, the empirical counterpart of $\Sigma$ is defined as 
$$\Sigma_n\doteq\frac{1}{n}\sum^n_{i=1}K(\cdot,X_i)\otimes_\Cal{H} K(\cdot,X_i).$$

Given two normed spaces $(\Cal{F},\Vert\cdot\Vert_\Cal{F})$ and $(\Cal{G},\Vert\cdot\Vert_\Cal{G})$, let $A:\Cal{F}\rightarrow\Cal{G}$ and $B:\Cal{F}\rightarrow\Cal{F}$ be two linear operators. The operator norm  of $A$ is defined as $\Vert A\Vert_{\text{op}} \doteq \sup_{f\in\Cal{F}}\frac{\Vert Af\Vert_\Cal{G}}{\Vert f\Vert_\Cal{F}}.$

The trace of a non-negative self-adjoint operator $B$, operating on a separable Hilbert space $\Cal{F}$, is defined as $\text{tr}(B) \doteq \sum_\ell \langle Be_\ell,e_\ell\rangle_\Cal{F},$
where $(e_\ell)_\ell$ is any orthonormal basis in $\Cal{F}$.  The Hilbert-Schmidt norm of $B$ is then defined as $\Vert B\Vert_{\Cal{L}_2(\Cal{F})} \doteq \sqrt{\text{tr}(B^*B)}$.

 A random element $v$ of $\cal H$
is said to have {\bf Gaussian} measure, denoted $\mathcal{N}_{\mathcal{H}}$, if for any $f \in \cal H$, $\iprod{v, f}_{\cal H}$ is Gaussian. It is known that  such a measure is well-defined, in the sense that $v\sim \mathcal{N}_{\mathcal{H}}$ has finite norm 
$\|v\|_{\cal H}$ w.p. 1, whenever its corresponding \emph{covariance operator} 
${\cal C} \doteq \bb E \, v\otimes_{\cal H} v - \mu\otimes_{\cal H}\mu, \, \mu = \bb E v$, is trace-class, i.e., has finite trace (see e.g., \citealp{bogachev98gaussian}). We can then parametrize the measure as  
$\mathcal{N}_{\mathcal{H}}(\mu,{\cal C})$.
% $\mathcal{N}_{\mathcal{H}}(\mu,\Sigma)$ denotes a Gaussian measure on $\Cal{H}$ with mean function $\mu$ and covariance operator $\Sigma$, i.e., for any $f\in \Cal{H}$, $\langle f,g\rangle_\Cal{H}\sim \mathcal{N}(\langle f,\mu\rangle_\Cal{H},\langle f,\Sigma f\rangle_\Cal{H})$ where $g\sim \mathcal{N}_{\mathcal{H}}(\mu,\Sigma)$. Here $\mathcal{N}(\mu,\sigma^2)$ denotes a normal distribution on $\bb{R}$ with mean $\mu$ and variance $\sigma^2$. %$L^2(\Cal{X},\mu)$ denotes the space of square $\mu$-integrable functions on $\Cal{X}$. 
 %where $\Cal{L}_2(\Cal{F})$ is the space of Hilbert-Schmidt operators that map $\Cal{F}$ to $\Cal{F}$. 
\section{Intuition on Random Projection in RKHS} \label{Sec:algorithm}
As mentioned in Section~\ref{Sec:Introduction}, a key contribution of this paper is in showing that the random projection operator $\Theta$ is related to the Gaussian sketch of a kernel matrix. Before we present and prove a rigorous result in Section~\ref{Sec:results}, in this section, we heuristically demonstrate the connection. In particular, we elucidate why $\Sigma^3$ shows up (rather than e.g. $\Sigma$, given that a priori $\bm K$ seems most naturally related to $\Sigma$), and why the origin of the peculiar normalization by $n^{3/2}\sqrt{d}$. 

Given a set of $N$ datapoints in $\bb{R}^D$, classical random projections in the style of Johnsohn-Lindenstrauss (J-L) consists of projecting the datapoints onto $d$ random directions which are sampled from a standard Gaussian distribution. The same idea can be intuitively carried forward to an RKHS, $\Cal{H}$ by sampling functions from a Gaussian measure on $\Cal{H}$---these functions act as directions along which a function in $\Cal{H}$ can be projected. Now, consider the random directions $v_i\stackrel{i.i.d.}{\sim}\mathcal{N}_{\mathcal{H}}(0,\Sigma^3)$ and define the random projection of $f\in\Cal{H}$ to $\bb{R}^d$ through the random operator $\Theta:\Cal{H}\rightarrow\bb{R}^d$
\begin{align}
f\mapsto \frac{1}{\sqrt{d}}\left(\langle v_1,f\rangle_\Cal{H},\ldots, \langle v_d,f\rangle_\Cal{H}\right)^\top. \label{eq:Theta}
\end{align}
It is important to note that random directions cannot be sampled from a Gaussian measure with identity covariance operator $I_{\cal H}$ (i.e., similar to the classical setting) as such a measure is not well-defined for infinite dimensional Hilbert spaces 
since $I_{\cal H}$ has infinite trace. The above normalization by $d^{-1/2}$ ensures that, with high-probability, 
$\iprod{\Theta g, \Theta f}_{2} \xrightarrow{d\to \infty}\iprod{g, \Sigma^3 f}_{\cal H}$ 
{(see Proposition \ref{prop:Thetaconvergence})}.

%because the identity operator is not trace class (i.e., trace is not finite). 
%\textcolor{red}{Another important observation is that the operator $\Theta$ is well-defined, i.e., $\langle v_i,f\rangle_\Cal{H}$ is finite for all $i\in[d]$. This follows from a subtle and non-obvious fact that $v_i\in\Cal{H}$ with probability one (see \cite{} for details).}
Now define $(u_i)^d_{i=1}\stackrel{i.i.d.}{\sim}\Cal{N}_\Cal{H}(0,\Sigma)$ so that $(v_i)^d_{i=1}$ can be written as $v_i=\Sigma u_i$ and 
\begin{equation}\langle v_i,f\rangle_\Cal{H}=\langle \Sigma u_i,f\rangle_\Cal{H}=\int f(x) u_i(x)\,d\rho_X(x)
=\langle u_i,f\rangle_{L^2(\Cal{X},\rho_X)}.\nonumber\end{equation} 
The above can be approximated empirically, using $\bm{X} \doteq \{X_i\}^n_{i=1}\stackrel{i.i.d.}{\sim}\rho_X$, as 
\begin{equation}\langle u_i,f\rangle_{L^2(\rho_X)}\approx\frac{1}{n}\sum^n_{j=1}f(X_j)u_i(X_j)=\frac{1}{n}\langle \sx u_i,\sx f\rangle_2,\end{equation}
%\frac{1}{n}\langle u_{i | \bX},\tf\rangle_2,\end{equation}
where $$\sx:\Cal{H}\rightarrow\bb{R}^n,\,\,f\mapsto (f(X_1),\ldots,f(X_n))^\top$$ is a \emph{sampling operator} \citep{Smale-07} whose adjoint is given by $$\sxs:\bb{R}^n\rightarrow\Cal{H},\,\,\bm{\beta}\mapsto\sum^n_{i=1}\beta_i K(\cdot,X_i).$$ 
%\textcolor{red}{We would like that mention that $u_i\not\in\Cal{H}$ with probability one and therefore the operation $\sx u_i$ is not technically valid. However, it turns out that $u_i$'s do not play any role in the analysis but are only useful to heuristically demonstrate the idea of relating $\Theta$ to a Gaussian kernel sketch. So, as a heuristic, we assume that $u_i\in\Cal{H}$ for all $i\in[d]$.}
%$u_{i| \bX} \doteq (u_i(X_1),\ldots,u_i(X_n))^\top$ and $\tf \doteq (f(X_1),\ldots,f(X_n))^\top$. Define $\bX \doteq (X_1,\ldots,X_n)$. 
It follows from Proposition~\ref{pro:gauss} (in the appendix) that, conditioned on the sample $\bm{X}$, $\sx u_{i}$ is distributed as $\Cal{N}(0,M)$ where $M \in {\bb R}^{n\times n}$ is defined as 
%has a distribution has the following distribution: 
%$$u_{i|\bX}|\bX\sim \mathcal{N}(0,M)$$ where 
\begin{equation}
M_{jl}
\doteq \langle K(\cdot,X_j),\Sigma K(\cdot,X_l)\rangle_\Cal{H}=\int_\Cal{X} K(x,X_j) K(x,X_l)\,d\rho_X(x), 
\label{Eq:M}
\end{equation}
with $X_j, X_l \in \bm X.$
%The above is obtained by using the reproducing property and noting that $\Sigma f=\int K(\cdot,x)f(x)\,d\rho_X(x)$ for $f\in\Cal{H}$. 
Based on $\bm{X}$, $M$ can be further approximated as $\hat M$ where 
\begin{equation}
\hat{M}_{jl} \doteq \langle K(\cdot,X_j),\Sigma_n K(\cdot,X_l)\rangle_\Cal{H}=\frac{1}{n}\sum^n_{i=1}K(X_i,X_j)K(X_i,X_l)=\frac{1}{n}(\bm{K}^2)_{jl},
% \begin{cases}\frac{1}{n}\sum_{s\in[n]\backslash\{j,l\}}K(X_s,X_j)K(X_s,X_l),\qquad j\ne l\\
% \frac{1}{n}\sum_{s\in[n]\backslash \{j\}}K^2(X_s,X_j),\qquad j=l,
% \end{cases}
\label{Eq:hatM}
\end{equation}
where $\bm{K}$ is the Gram matrix based on $\bm{X}$. To summarize, we have carried out the following sequence of approximations to $\langle v_i,f\rangle_\Cal{H}$:
$$\langle v_i,f\rangle_\Cal{H}=\langle u_i,f\rangle_{L^2(\Cal{X},\rho_X)}\approx \frac{1}{n}\langle \sx u_i,\sx f\rangle_2$$ where $\sx u_i\sim \Cal{N}(0,M)\approx \Cal{N}\left(0,\frac{1}{n}\bm{K}^2\right).$
This means an approximation to $\langle v_i,f\rangle_\Cal{H}$ can be obtained by sampling, say $\hat{v}_i$ from $\Cal{N}\left(0,\frac{1}{n}\bm{K}^2\right)$ and computing $\frac{1}{n}\langle \hat{v}_i,\sx f\rangle_2$. Recalling the form of $\Theta$ \eqref{eq:Theta}, define 
$$\hat{V}=\frac{1}{n\sqrt{d}}[\hat{v}_1,\ldots,\hat{v}_d]\in \bb{R}^{n\times d}.$$ The \emph{approximate random projection operator} is then 
$\hat{V}^\top\sx :\Cal{H}\rightarrow \bb{R}^d$, where 
$$f\mapsto \hat{V}^\top\sx f=\frac{1}{n\sqrt{d}}(\langle \hat{v}_1,\sx f \rangle_2,\ldots, \langle \hat{v}_d,\sx f \rangle_2)^\top.$$
Note that $\hat{V}^\top=\frac{1}{n\sqrt{d}}[\hat{v}_1,\ldots,\hat{v}_d]^\top=\frac{1}{n\sqrt{nd}}Z\bm{K}$ with $Z\in\bb{R}^{d\times n}$ having i.i.d.~$\Cal{N}(0,1)$ entries.
\section{Main Results}\label{Sec:results}
In this section, we formalize the relation between $\Theta$ and $\hat V^\top S_{\bm X}$ by showing that, with high-probability, 
$\iprod{ \hat V^\top g_{| \bm X},  \hat V^\top  f_{| \bm X}}_2 \approx \iprod{g, \Sigma^3 f}_{\cal H} \approx \iprod{\Theta g, \Theta f}_2$. This relation is established in Proposition\ref{prop:Thetaconvergence} (proved in Section~\ref{pro:sigma3}), and Theorem \ref{Thm:main}. In the sequel, we let 
$a \land b \doteq \min \{a, b\}$ and $a \lor b \doteq \max \{a, b\}$. 
%we present mathematical results about the behavior of $\hat{V}^\top\sx$ and $\Theta$ as random projection operators in preseving certain inner products. Theorem~\ref{Thm:main} shows that for any $f,g\in\Cal{H}$, the inner product of two approximate random projections, $\langle \hat{V}^\top\sx f,\hat{V}^\top\sx g\rangle_2$ is close to the average of the inner product of two random projections, i.e., $\bb{E}\langle \Theta f,\Theta g\rangle_2=\bb{E}\langle g,\Theta^*\Theta f\rangle_\Cal{H}=\langle g,\Sigma^3 f\rangle_\Cal{H}$ since $\bb{E}[\Theta^*\Theta]=\frac{1}{d}\sum^d_{i=1}\bb{E}[v_i\otimes_{\Cal{H}} v_i]=\Sigma^3$. %The convergence of reconstruction operators (introduced in Remark~\ref{rem:2}) in operator norm is discussed in Remark~\ref{cor:main}. 

\begin{pro} 
\label{prop:Thetaconvergence}
Define $s_\Sigma=\frac{\emph{tr}(\Sigma)}{\Vert \Sigma\Vert_{\emph{op}}}$. For any $\tau\ge 1 $ and $d\ge (s_{\Sigma^3}\vee \tau)$, with probability at least $1-e^{-\tau}$,
\begin{eqnarray}
\sup_{f,g\in\Cal{H}}\frac{\left|\langle \Theta g,\Theta f\rangle_2-\langle g,\Sigma^3 f\rangle_\Cal{H}\right|}{\Vert f\Vert_\Cal{H}\Vert g\Vert_\Cal{H}}\le %\color{red}
\mathfrak{C}\Vert \Sigma\Vert^3_{\emph{op}}\frac{\sqrt{s_{\Sigma^3}}+\sqrt{\tau}}{\sqrt{d}},\nonumber
\end{eqnarray}
where $\mathfrak{C}$ is a universal constant independent of $\Sigma$, $\tau$ and $d$.
\end{pro}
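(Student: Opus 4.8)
The plan is to recognize the left-hand supremum as the operator norm of the deviation between an empirical covariance operator and its mean, and then to control that operator norm via a concentration inequality for sample covariance operators of Gaussian elements of $\Cal{H}$.

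First I would unfold the definition of $\Theta$ in \eqref{eq:Theta}. Writing $\hat{\Sigma}_d \doteq \frac{1}{d}\sum_{i=1}^d v_i \otimes_{\Cal{H}} v_i$, we have $\iprod{\Theta g, \Theta f}_2 = \frac{1}{d}\sum_{i=1}^d \iprod{v_i,g}_{\Cal H}\iprod{v_i,f}_{\Cal H} = \iprod{g, \hat{\Sigma}_d f}_{\Cal H}$. Since each $v_i \sim \Cal{N}_{\Cal H}(0,\Sigma^3)$ has mean zero and covariance operator $\Sigma^3$, we get $\bb{E}[v_i \otimes_{\Cal H} v_i] = \Sigma^3$, so that $\hat{\Sigma}_d$ is unbiased for $\Sigma^3$ and $\iprod{g,\Sigma^3 f}_{\Cal H} = \iprod{g, \bb{E}[\hat{\Sigma}_d] f}_{\Cal H}$. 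By the variational (duality) characterization of the operator norm, this collapses the statement to a single-operator estimate:
$$\sup_{f,g\in\Cal H}\frac{\bigl|\iprod{\Theta g,\Theta f}_2 - \iprod{g,\Sigma^3 f}_{\Cal H}\bigr|}{\Vert f\Vert_{\Cal H}\Vert g\Vert_{\Cal H}} = \bigl\Vert \hat{\Sigma}_d - \Sigma^3 \bigr\Vert_{\text{op}}.$$

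Second, I would bound $\Vert \hat{\Sigma}_d - \Sigma^3 \Vert_{\text{op}}$ by a Bernstein / Koltchinskii--Lounici-type concentration inequality for the sample covariance of $d$ i.i.d.\ centered Gaussian elements of a separable Hilbert space with covariance $C$. Such a bound reads, with probability at least $1-e^{-\tau}$, $\Vert \hat{\Sigma}_d - C \Vert_{\text{op}} \lesssim \Vert C\Vert_{\text{op}}\bigl( \sqrt{r/d} \vee (r/d) \vee \sqrt{\tau/d} \vee (\tau/d) \bigr)$, where $r \doteq \text{tr}(C)/\Vert C\Vert_{\text{op}}$ is the effective rank. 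Applying this with $C = \Sigma^3$ gives $r = s_{\Sigma^3}$ and $\Vert C\Vert_{\text{op}} = \Vert\Sigma\Vert_{\text{op}}^3$. The hypothesis $d \ge s_{\Sigma^3}\vee\tau$ is then exactly what forces $s_{\Sigma^3}/d \le \sqrt{s_{\Sigma^3}/d}$ and $\tau/d \le \sqrt{\tau/d}$, so the linear-in-$1/d$ terms are absorbed into the square-root terms; combining $\sqrt{s_{\Sigma^3}/d}$ and $\sqrt{\tau/d}$ additively yields the claimed rate $\mathfrak{C}\Vert\Sigma\Vert^3_{\text{op}}(\sqrt{s_{\Sigma^3}}+\sqrt{\tau})/\sqrt{d}$.

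The main obstacle is making the concentration step in the previous paragraph rigorous in infinite dimensions. Unlike the finite-dimensional matrix-Bernstein setting, the rank-one summands $v_i \otimes_{\Cal H} v_i$ have unbounded operator norm $\Vert v_i\Vert_{\Cal H}^2$ (a Gaussian chaos with mean $\text{tr}(\Sigma^3)$), so no bounded-increment inequality applies directly. The remedy I would either cite or reproduce is to diagonalize $\Sigma^3$ in its eigenbasis, represent each $v_i$ through independent scalar Gaussians scaled by the eigenvalue square-roots, and then either truncate the Gaussian tails and invoke a sub-exponential operator Bernstein inequality, or run a Gaussian-chaos concentration argument over an $\varepsilon$-net of the unit sphere whose effective size is governed by $s_{\Sigma^3}$. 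The delicate point is tracking how the effective dimension $s_{\Sigma^3}$ enters as the variance proxy and verifying the exact dependence on $\Vert\Sigma\Vert_{\text{op}}^3$; everything else reduces to the routine simplifications described above.
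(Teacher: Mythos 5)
Your proposal is correct and takes essentially the same route as the paper: the paper likewise rewrites the supremum as $\left\Vert \Theta^*\Theta-\Sigma^3\right\Vert_{\text{op}}$ with $\Theta^*\Theta=\frac{1}{d}\sum^d_{i=1}v_i\otimes_{\Cal{H}}v_i$, the empirical covariance of $d$ i.i.d.\ centered Gaussian elements with covariance $\Sigma^3$, and then applies the Koltchinskii--Lounici bound (Theorem~\ref{Thm:kL}). The concentration inequality you sketch re-deriving in your final paragraph is simply cited in the paper rather than proved.
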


% {\color{red} I'm confused about the conditions on $n$ ... Can these be simplified? The dependences do not make sense 
% a priori ..., for instance, why is $s_\Sigma^4/s_{\Sigma^3}^2$ appearing? I would rather just see bounds in terms of $\kappa$ since these do not add insight but are rather confusing. } 

\begin{thm}[Convergence of inner products]\label{Thm:main}
Let %$\Vert\Sigma\Vert_{\emph{op}}\ge 1$ and 
$\tau\ge 1$. Define $s_\Sigma=\frac{\emph{tr}(\Sigma)}{\Vert\Sigma\Vert_{\emph{op}}}$. %For any $\tau\ge 1$, define $\tau' \doteq \tau+\log(2(s_\Sigma\vee s_{\Sigma^3}))$. 
Suppose 
$$n\ge \frac{6272\kappa s^5_\Sigma\tau}{\Vert \Sigma\Vert_{\emph{op}}}
% \left(1\vee \frac{s^4_\Sigma}{s^2_{\Sigma^3}}\right)\,\,\emph{and}\,\,d\ge\left( 4s_{\Sigma^3}\vee \tau\right)
.$$
% 
% 8\kappa\Vert\Sigma\Vert_{\emph{op}} \left(49\Vert \Sigma\Vert^4_{\emph{op}}\vee s^2_\Sigma\vee\frac{196}{\Vert\Sigma\Vert^2_{\emph{op}}}\right)\tau'\,\,\emph{and}\,\,d\ge\left( 3s_{\Sigma^3}\vee \tau\right),$$
Then, with probability at least $1-5e^{-\tau}$ jointly over the choice of 
$\{\hat{v}_i\}^d_{i=1}$ and $\{X_i\}^n_{i=1}$:
\begin{eqnarray}
\sup_{f,g\in\Cal{H}}\frac{\left|\left\langle \hat{V}^\top \sx g,\hat{V}^\top  \sx f\right\rangle_2-\left\langle g,\Sigma^3f \right\rangle_\Cal{H}\right|}{\Vert f\Vert_\Cal{H}\Vert g\Vert_\Cal{H}}%&{}={}& \left\Vert \sxs\hat{V}\hat{V}^\top\sx-\Sigma^3\right\Vert_{\emph{op}}\nonumber\\
&{}\le{}& 3\mathfrak{C}\Vert \Sigma\Vert^3_{\emph{op}}\frac{\sqrt{2s_{\Sigma^3}}+\sqrt{\tau}}{\sqrt{d}} 
%\nonumber\\
%&{}{}&\qquad\qquad
+\frac{28 \Vert \Sigma\Vert^{5/2}_{\emph{op}}\sqrt{2\kappa s_\Sigma\tau}}{\sqrt{n}},\nonumber
\end{eqnarray}
%\begin{comment}
%\color{red}$n\ge m\ge \frac{1152\kappa\emph{tr}(\Sigma)\tau}{\Vert\Sigma\Vert^2_{\emph{op}}}$ and $d\ge  \frac{3\emph{tr}^2(\Sigma)}{\Vert\Sigma\Vert^2_{\emph{op}}}$,
%with probability at least $1-21e^{-\tau}$ jointly over the choice of $(\hat{u}_i)^d_{i=1}$ and $(X_i)^n_{i=1}$:
%\begin{equation}
%\left|\left\langle\hat{V}^\top  \tg,\hat{V}^\top\tf\right\rangle_2-\left\langle g,\Sigma^3f \right\rangle_\Cal{H}\right|\le \frac{4\frak{C}\Vert\Sigma\Vert^2_{\emph{op}}\emph{tr}(\Sigma)}{\sqrt{d}}\Vert f\Vert_\Cal{H}\Vert g\Vert_\Cal{H}+\Pi(d,m,n)\sqrt{\tau}\Vert f\Vert_\Cal{H}\Vert g\Vert_\Cal{H},\nonumber
%\end{equation}
%where 
%$$ \Pi(d,m,n) \doteq  4\Vert\Sigma\Vert^2_{\emph{op}}\left(
%4\sqrt{\frac{2\kappa\emph{tr}(\Sigma)}{n}}+\frac{\frak{C}\Vert\Sigma\Vert_{\emph{op}}}{\sqrt{d}}+4\frak{C}\sqrt{\frac{2\kappa\emph{tr}(\Sigma)}{m}}\right)$$
%\end{comment}
where $\frak{C}$ is a universal constant that does not depend on $n$, $d$, $\kappa$ and $\Sigma$.
\end{thm}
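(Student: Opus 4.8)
The plan is to interpolate between the two inner products through the \emph{empirical} covariance operator $\Sigma_n$, splitting the error into a Gaussian-sketching part (controlled for fixed data by Proposition~\ref{prop:Thetaconvergence}) and a covariance-concentration part (controlled by an operator Bernstein inequality). First I would recast the left-hand quantity as an operator-norm deviation: setting $w_i \doteq \frac{1}{n}\sxs \hat v_i \in \Cal H$ one has $\langle w_i, f\rangle_{\Cal H} = \frac{1}{n}\langle \hat v_i, \sx f\rangle_2$, hence
\begin{equation}
\langle \hat V^\top \sx g, \hat V^\top \sx f\rangle_2 = \frac{1}{d}\sum_{i=1}^d \langle w_i, g\rangle_{\Cal H}\langle w_i, f\rangle_{\Cal H} = \Big\langle g, \Big(\tfrac{1}{d}\sum_{i=1}^d w_i \otimes_{\Cal H} w_i\Big) f\Big\rangle_{\Cal H}. \nonumber
\end{equation}
Conditionally on $\bm X$, each $w_i$ is a linear image of $\hat v_i\sim \Cal N(0,\frac1n\bm K^2)$, so $w_i \stackrel{i.i.d.}{\sim} \Cal N_{\Cal H}(0, C_n)$ with $C_n = \frac{1}{n^3}\sxs \bm K^2 \sx = (\sxs\sx/n)^3 = \Sigma_n^3$, using $\sx\sxs = \bm K$ and $\sxs\sx = n\Sigma_n$. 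Dividing by $\|f\|_{\Cal H}\|g\|_{\Cal H}$, taking the supremum, and applying the triangle inequality through $\langle g,\Sigma_n^3 f\rangle_{\Cal H}$ gives
\begin{equation}
\sup_{f,g}\frac{|\langle \hat V^\top \sx g, \hat V^\top \sx f\rangle_2 - \langle g,\Sigma^3 f\rangle_{\Cal H}|}{\|f\|_{\Cal H}\|g\|_{\Cal H}} \le \underbrace{\Big\|\tfrac1d\sum_{i=1}^d w_i\otimes_{\Cal H} w_i - \Sigma_n^3\Big\|_{\text{op}}}_{(A)} + \underbrace{\big\|\Sigma_n^3 - \Sigma^3\big\|_{\text{op}}}_{(B)}. \nonumber
\end{equation}

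For $(A)$ I would apply Proposition~\ref{prop:Thetaconvergence} conditionally on $\bm X$: its argument only uses that the $\{w_i\}$ are i.i.d.\ centred Gaussians with a fixed trace-class covariance, so with $\Sigma^3$ replaced by $C_n=\Sigma_n^3$ it yields, on a conditional event of probability $\ge 1-e^{-\tau}$ and for $d\ge s_{\Sigma_n^3}\vee\tau$, the bound $(A)\le \mathfrak{C}\|\Sigma_n\|_{\text{op}}^3(\sqrt{s_{\Sigma_n^3}}+\sqrt{\tau})/\sqrt d$. For $(B)$ I would telescope $\Sigma_n^3-\Sigma^3 = (\Sigma_n-\Sigma)\Sigma^2 + \Sigma_n(\Sigma_n-\Sigma)\Sigma + \Sigma_n^2(\Sigma_n-\Sigma)$, so that $(B)\le(\|\Sigma\|_{\text{op}}^2 + \|\Sigma_n\|_{\text{op}}\|\Sigma\|_{\text{op}} + \|\Sigma_n\|_{\text{op}}^2)\,\|\Sigma_n-\Sigma\|_{\text{op}}$.

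The remaining work is a single $\sigma(\bm X)$-measurable event on which $\Sigma_n$ is close to $\Sigma$. I would invoke an intrinsic-dimension operator Bernstein inequality for the rank-one sum $\Sigma_n = \frac1n\sum_i K(\cdot,X_i)\otimes_{\Cal H} K(\cdot,X_i)$ (summands bounded by $\kappa$ in operator norm, variance proxy $\preceq\kappa\Sigma$), which gives $\|\Sigma_n-\Sigma\|_{\text{op}}\lesssim\sqrt{\kappa\|\Sigma\|_{\text{op}}s_\Sigma\tau/n}$. The hypothesis $n\ge 6272\kappa s_\Sigma^5\tau/\|\Sigma\|_{\text{op}}$ forces this deviation to be of order $\|\Sigma\|_{\text{op}}/s_\Sigma^2$, which on the event secures $\|\Sigma_n\|_{\text{op}}\le\tfrac43\|\Sigma\|_{\text{op}}$ (so $\|\Sigma_n\|_{\text{op}}^3\le 3\|\Sigma\|_{\text{op}}^3$) and, together with trace-level control of $\text{tr}(\Sigma_n^3)$, the bound $s_{\Sigma_n^3}\le 2s_{\Sigma^3}$. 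Substituting these deterministic replacements into $(A)$ and $(B)$ turns them into $3\mathfrak{C}\|\Sigma\|_{\text{op}}^3(\sqrt{2s_{\Sigma^3}}+\sqrt\tau)/\sqrt d$ and $28\|\Sigma\|_{\text{op}}^{5/2}\sqrt{2\kappa s_\Sigma\tau}/\sqrt n$; a union bound over the conditional Gaussian event and the $\bm X$-events used to stabilise $\|\Sigma_n\|_{\text{op}}$ and $s_{\Sigma_n^3}$ accounts for the stated failure probability $5e^{-\tau}$.

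The main obstacle is transferring the \emph{data-dependent} quantities in $(A)$—the spectral norm $\|\Sigma_n\|_{\text{op}}^3$ and especially the empirical effective dimension $s_{\Sigma_n^3}=\text{tr}(\Sigma_n^3)/\|\Sigma_n\|_{\text{op}}^3$—back to their population analogues, since the crude bound $s_{\Sigma_n^3}\le\text{tr}(\Sigma_n)/\|\Sigma_n\|_{\text{op}}$ is too lossy. Stabilising $s_{\Sigma_n^3}$ demands simultaneous two-sided control of the top of the spectrum and of $\text{tr}(\Sigma_n^3)$, which is precisely what inflates the sample requirement to the fifth power of $s_\Sigma$. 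A minor but necessary point is that the $\bm X$-event is $\sigma(\bm X)$-measurable while Proposition~\ref{prop:Thetaconvergence} is applied to the conditional law of $\{\hat v_i\}$ given $\bm X$, so the two events combine cleanly without circularity.
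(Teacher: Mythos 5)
Your proposal follows essentially the same route as the paper's proof: it rewrites the supremum as the operator norm $\lv \sxs\hat V\hat V^\top\sx-\Sigma^3\rv_{\text{op}}$, splits through $\Sigma_n^3$ using $\bm K=\sx\sxs$ and $n\Sigma_n=\sxs\sx$, controls the sketching term by the Koltchinskii--Lounici Gaussian covariance bound applied conditionally on $\bm X$ (this is exactly the paper's Lemma~\ref{lem:hatU}), and handles $\lv\Sigma_n^3-\Sigma^3\rv_{\text{op}}$ and the transfer of the empirical effective dimension via a Hilbert-space Bernstein inequality, with the same probability accounting. The only differences are cosmetic --- a three-term versus seven-term telescoping of $\Sigma_n^3-\Sigma^3$, and an intrinsic-dimension operator Bernstein in place of the paper's Hilbert--Schmidt-norm Bernstein, both yielding the same $\sqrt{\kappa\,\text{tr}(\Sigma)\tau/n}$ deviation --- so the proposal is correct and matches the paper's argument.
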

{\bf Remark.} (Main dependence on $n$ and $d$) The leading constants above are in terms of $\|\Sigma\|_{\text{op}} \leq \kappa \doteq \sup_x K(x, x)$, and are therefore expected to be small for common kernels such as Gaussian ($\kappa = 1$). Thus the main dependence on $n$ and $d$ in the rates are given by $s_\Sigma$ and $s_{\Sigma^3} \leq s_\Sigma$, viewed as \emph{effective dimension} terms. Thus, whenever $s_\Sigma$ is small, both $n$ and $d$ can be chosen small while maintaining the guarantees of the above theorem. In our experiments of Section \ref{sec:KPCA}, $d \leq n \leq 100$ is often sufficient even for datasizes in excess of 40K points (see e.g. Figure \ref{fig:varyingND}).
\begin{proof}
%For $\sx:\Cal{H}\rightarrow\bb{R}^n$,  $f\mapsto (f(X_1),\ldots,f(X_n))^\top$, its adjoint $\sxs:\bb{R}^n\rightarrow\Cal{H}$ is given by $\sxs\bm{\alpha}=\sum^n\alpha_i K(\cdot,X_i)$ where $\bm{\alpha} \doteq (\alpha_1,\ldots,\alpha_n)^\top$.
%Then it follows that
Define
%\begin{equation}
$A(f,g) \doteq \left\langle  \hat{V}^\top\sx g,\hat{V}^\top\sx  f\right\rangle_2-\left\langle g,\Sigma^3f \right\rangle_\Cal{H}$. Since $$\left\langle  \hat{V}^\top\sx g,\hat{V}^\top\sx  f\right\rangle_2=\left\langle  g,\sxs\hat{V}\hat{V}^\top\sx  f\right\rangle_\Cal{H},$$ we have %\nonumber
%\end{equation}
%so that
%\begin{eqnarray}
$\sup_{f,g\in\Cal{H}}\frac{|A(f,g)|}{\Vert f\Vert_\Cal{H}\Vert g\Vert_\Cal{H}}=
%\sup_{f,g\in\Cal{H}}\frac{\left|\left\langle  \hat{V}^\top\sx g,\hat{V}^\top\sx  f\right\rangle_2-\left\langle g,\Sigma^3f \right\rangle_\Cal{H}\right|}{\Vert f\Vert_\Cal{H}\Vert g\Vert_\Cal{H}}\nonumber\\
%&{}={}&\sup_{f,g\in\Cal{H}}\frac{\left|\left\langle  g,\sxs\hat{V}\hat{V}^\top\sx  f\right\rangle_\Cal{H}-\left\langle g,\Sigma^3f \right\rangle_\Cal{H}\right|}{\Vert f\Vert_\Cal{H}\Vert g\Vert_\Cal{H}}\nonumber\\
%&{}={}&\sup_{f,g\in\Cal{H}}\frac{\left|\left\langle  g,\left(\sxs\hat{V}\hat{V}^\top\sx-\Sigma^3\right)  f\right\rangle_\Cal{H}\right|}{\Vert f\Vert_\Cal{H}\Vert g\Vert_\Cal{H}}\nonumber\\
%\end{eqnarray}
%\begin{eqnarray}
\sup_{f\in\Cal{H}}\frac{\left\Vert \left(\sxs\hat{V}\hat{V}^\top\sx-\Sigma^3\right)  f\right\Vert_\Cal{H}}{\Vert f\Vert_\Cal{H}}%\nonumber\\
=\left\Vert \sxs\hat{V}\hat{V}^\top\sx-\Sigma^3\right\Vert_{\text{op}}.$ %\nonumber
%\end{eqnarray}
\begin{comment}
\begin{eqnarray}
\sup_{f\in\Cal{H}}\frac{\left|\left\Vert \hat{V}^\top  \tf\right\Vert^2_2-\left\langle f,\Sigma^3f \right\rangle_\Cal{H}\right|}{\Vert f\Vert^2_\Cal{H}}&{}={}&\sup_{f\in\Cal{H}}\frac{\left|\left\langle  f,\sxs\hat{V}\hat{V}^\top\sx  f\right\rangle_\Cal{H}-\left\langle f,\Sigma^3f \right\rangle_\Cal{H}\right|}{\Vert f\Vert^2_\Cal{H}}\nonumber\\
&{}={}&\sup_{f\in\Cal{H}}\frac{\left|\left\langle  f,\left(\sxs\hat{V}\hat{V}^\top\sx-\Sigma^3\right)  f\right\rangle_\Cal{H}\right|}{\Vert f\Vert^2}
\le \sup_{f\in\Cal{H}}\frac{\left\Vert \left(\sxs\hat{V}\hat{V}^\top\sx-\Sigma^3\right)  f\right\Vert_\Cal{H}}{\Vert f\Vert_\Cal{H}}\nonumber\\
&{}={}&\left\Vert \sxs\hat{V}\hat{V}^\top\sx-\Sigma^3\right\Vert_{\text{op}}.\nonumber
\end{eqnarray}
\end{comment}
In the following, we bound $\left\Vert \sxs\hat{V}\hat{V}^\top\sx-\Sigma^3\right\Vert_{\text{op}}$. To this end, consider
\begin{eqnarray}
\sxs\hat{V}\hat{V}^\top\sx-\Sigma^3&{}={}& \sxs\left(\hat{V}\hat{V}^\top-\frac{1}{n^3}\bm{K}^2\right)\sx +\frac{1}{n^3}\sxs \bm{K}^2 \sx-\Sigma^3\nonumber\\
&{}\stackrel{(\dagger)}{=}{}&
\sxs\left(\hat{V}\hat{V}^\top-\frac{1}{n^3}\bm{K}^2\right)\sx+\Sigma^3_n-\Sigma^3,\nonumber
%\sxs\hat{V}\hat{V}^\top\sx-\frac{1}{n^2}\sxs M\sx +\Sigma_n\Sigma\Sigma_n-\Sigma^3\nonumber\\
%&{}={}&\sxs\left(\hat{V}\hat{V}^\top-\frac{1}{n^2}M\right)\sx+(\Sigma_n-\Sigma)\Sigma(\Sigma_n-\Sigma)+(\Sigma_n-\Sigma)\Sigma^2+\Sigma^2(\Sigma_n-\Sigma)\nonumber\\
%&{}={}&\sxs\left(\hat{V}\hat{V}^\top-\frac{1}{n^2}AA^\top\right)\sx+\frac{1}{n^2}\sxs\left(AA^\top-M\right)\sx\nonumber\\
%&{}{}&\qquad+(\Sigma_n-\Sigma)\Sigma(\Sigma_n-\Sigma)+(\Sigma_n-\Sigma)\Sigma^2+\Sigma^2(\Sigma_n-\Sigma),\nonumber
\end{eqnarray}
where in $(\dagger)$, we use the facts that $\bm{K}=\sx\sxs$ and $\Sigma_n=\frac{1}{n}\sxs\sx$. Therefore,
\begin{eqnarray}
\left\Vert \sxs\hat{V}\hat{V}^\top\sx-\Sigma^3\right\Vert_{\text{op}}&{}\le{}& \left\Vert \sxs\left(\hat{V}\hat{V}^\top-\frac{1}{n^3}\bm{K}^2\right)\sx\right\Vert_{\text{op}}+\Vert \Sigma^3_n-\Sigma^3\Vert_{\text{op}}\nonumber\\
&{}\stackrel{(\ddagger)}{\le}&{}\frak{C} \left(\sqrt{\frac{\text{tr}(\Sigma^3_n)\Vert \Sigma^3_n\Vert_{\text{op}}}{d}}+ \Vert \Sigma^3_n\Vert_{\text{op}}\sqrt{\frac{\tau}{d}}\right)+\Vert \Sigma^3_n-\Sigma^3\Vert_{\text{op}},\nonumber\\
&{}\le{}&\frak{C} \left(\sqrt{\frac{\text{tr}(\Sigma^3_n)\Vert \Sigma^3_n-\Sigma^3\Vert_{\text{op}}}{d}}+ \sqrt{\frac{\text{tr}(\Sigma^3_n)\Vert\Sigma^3\Vert_{\text{op}}}{d}}+\Vert \Sigma^3\Vert_{\text{op}}\sqrt{\frac{\tau}{d}}\right)\nonumber\\
&{}{}&\qquad\qquad+\Vert \Sigma^3_n-\Sigma^3\Vert_{\text{op}}\left(1+\frak{C}\sqrt{\frac{\tau}{d}}\right),\label{Eq:main}
\end{eqnarray}
where $(\ddagger)$ follows from Lemma~\ref{lem:hatU}, which holds with probability $1-e^{-\tau}$ over the choice of $(\hat{v}_i)^d_{i=1}$ conditioned on $\bm{X}$ for any $\tau\ge 1$ and $d\ge \left(\frac{\text{tr}(\Sigma^3_n)}{\Vert \Sigma^3\Vert_{\text{op}}-\Vert \Sigma^3-\Sigma^3_n\Vert_{\text{op}}}\vee \tau\right)\ge  \left(\frac{\text{tr}(\Sigma^3_n)}{\Vert \Sigma^3_n\Vert_{\text{op}}}\vee \tau\right)$ with $\frak{C}$ being a universal constant independent of $\sxs \bm{K}^2\sx$ and $d$. We now bound $\text{tr}(\Sigma^3_n)$ and $\Vert \Sigma^3_n-\Sigma^3\Vert_{\text{op}}$. Consider
\begin{eqnarray}
\Sigma^3_n-\Sigma^3&{}={}&(\Sigma_n-\Sigma+\Sigma)^3-\Sigma^3\nonumber\\
&{}={}&(\Sigma_n-\Sigma)^3+(\Sigma_n-\Sigma)^2\Sigma+(\Sigma_n-\Sigma)\Sigma(\Sigma_n-\Sigma)+(\Sigma_n-\Sigma)\Sigma^2\nonumber\\
&{}{}&\qquad+\Sigma(\Sigma_n-\Sigma)^2+\Sigma(\Sigma_n-\Sigma)\Sigma+\Sigma^2(\Sigma_n-\Sigma),\nonumber
\end{eqnarray}
which yields
% \begin{eqnarray}
% \Vert \Sigma^3_n-\Sigma^3\Vert_{\text{op}}&{}\le{}&\Vert\Sigma_n-\Sigma\Vert^3_{\text{op}}+2\Vert\Sigma_n-\Sigma\Vert_{\text{op}}\Vert\Sigma(\Sigma_n-\Sigma)\Vert_{\text{op}}\nonumber\\
% &{}{}&\qquad +2\Vert\Sigma^2(\Sigma_n-\Sigma)\Vert_{\text{op}}+\Vert\Sigma^{1/2}(\Sigma_n-\Sigma)\Vert^2_{\text{op}}+\Vert\Sigma( \Sigma_n-\Sigma)\Sigma\Vert_{\text{op}}\nonumber
% \end{eqnarray}
\begin{eqnarray}
\Vert \Sigma^3_n-\Sigma^3\Vert_{\text{op}}&{}\le{}&\Vert\Sigma_n-\Sigma\Vert^3_{\text{op}}+3\Vert\Sigma_n-\Sigma\Vert^2_{\text{op}}\Vert\Sigma\Vert_{\text{op}}+3\Vert\Sigma_n-\Sigma\Vert_{\text{op}}\Vert\Sigma\Vert^2_{\text{op}}\nonumber\\
&{}\le{}&\Vert\Sigma_n-\Sigma\Vert^3_{\hs}+3\Vert\Sigma_n-\Sigma\Vert^2_{\hs}\Vert\Sigma\Vert_{\text{op}}+3\Vert\Sigma_n-\Sigma\Vert_{\hs}\Vert\Sigma\Vert^2_{\text{op}}\nonumber
\end{eqnarray}
and
% \begin{eqnarray}
% \text{tr}(\Sigma^3_n)&{}\le{}& \text{tr}(\Sigma^3)+\Vert\Sigma_n-\Sigma\Vert^2_{\hs}\Vert \Sigma_n-\Sigma\Vert_{\text{op}}+3\Vert \Sigma\Vert_{\text{op}}\Vert\Sigma_n-\Sigma\Vert^2_{\hs}\nonumber\\
% &{}{}&\qquad +3\Vert \Sigma\Vert^2_{\hs}\Vert\Sigma_n-\Sigma\Vert_{\text{op}}.\nonumber
% \end{eqnarray}
\begin{eqnarray}
\text{tr}(\Sigma^3_n)&{}\le{}& \text{tr}(\Sigma^3)+\Vert\Sigma_n-\Sigma\Vert^3_{\hs}+3\Vert \Sigma\Vert_{\hs}\Vert\Sigma_n-\Sigma\Vert^2_{\hs}
+3\Vert \Sigma\Vert^2_{\hs}\Vert\Sigma_n-\Sigma\Vert_{\hs}.\nonumber
\end{eqnarray}
It follows from Lemma~\ref{lem:bern} that for any $\tau>0$ 
%$\tau\ge \frac{1}{2}-\log(2(s_\Sigma\wedge s_{\Sigma^3}))$ 
and 
% $n\ge 8\kappa\Vert \Sigma\Vert_{\text{op}}\left(16\vee s^2_\Sigma\right)\tau'$, 
$n\ge \frac{32\kappa s_\Sigma \tau}{\Vert \Sigma\Vert_{\text{op}}}$, we have 
\begin{equation}\Vert \Sigma^3_n-\Sigma^3_n\Vert_{\text{op}}\le 
% 7 \Vert \Sigma\Vert^2_{\text{op}}\sqrt{\frac{8\kappa\Vert \Sigma\Vert_{\text{op}}\tau'}{n}}
28 \Vert \Sigma\Vert^2_{\text{op}}\sqrt{\frac{2\kappa\text{tr}(\Sigma)\tau}{n}}
\label{Eq:sigma3}\end{equation}
% and $n\ge \frac{32\kappa s_\Sigma\Vert\Sigma\Vert_{\text{op}} \tau}{\Vert \Sigma\Vert^2_{\hs}}$, with probability at least $1-2e^{-\tau}$ over $(X_1,\ldots,X_n)$,
and
\begin{equation}\text{tr}(\Sigma^3_n)\le \text{tr}(\Sigma^3)+28 \Vert \Sigma\Vert^2_{\hs}\sqrt{\frac{2\kappa\text{tr}( \Sigma)\tau}{n}},\label{Eq:sigmatr}\end{equation}
where each of the above inequalities hold with probability at least $1-2e^{-\tau}$ over $(X_1,\ldots,X_n)$. 
Using \eqref{Eq:sigma3} and \eqref{Eq:sigmatr} in \eqref{Eq:main} yields the result, upon typing a few loose ends. 

Define $\Delta \doteq 28 \Vert \Sigma\Vert^2_{\hs}\sqrt{\frac{2\kappa\text{tr}( \Sigma)\tau}{n}}$ and $\Delta^\prime\doteq \frac{\Vert\Sigma\Vert^2_{\text{op}}}{\Vert\Sigma\Vert^2_{\hs}}\theta$. As aforementioned, \eqref{Eq:main} holds if $d\ge \left(\frac{\text{tr}(\Sigma^3_n)}{\Vert\Sigma^3\Vert_{\text{op}}-\Vert\Sigma^3_n-\Sigma^3\Vert_{\text{op}}}\vee \tau\right)$ which is true if $d\ge \left(\frac{\text{tr}(\Sigma^3)+\Delta}{\Vert\Sigma^3\Vert_{\text{op}}-\Delta^\prime}\vee\tau\right)$. Under the assumed conditions on $n$, it follows that $\Vert\Sigma\Vert^3_{\text{op}}\ge 2\Delta^\prime$ and $\text{tr}(\Sigma^3)\le \Delta$, which yields that $d\ge \left(\frac{\text{tr}(\Sigma^3)+\Delta}{\Vert\Sigma^3\Vert_{\text{op}}-\Delta^\prime}\vee\tau\right)$ is true if $d\ge (4s_{\Sigma^3}\vee \tau).$
% 
% and $d$ in the statement of the theorem, we obtain that $\Vert\Sigma\Vert^3_{\text{op}}\ge 2\theta$ which ensures that $d\ge \left(\frac{\text{tr}(\Sigma^3)+\theta}{\Vert\Sigma^3\Vert_{\text{op}}-\theta}\vee\tau\right)$.
\end{proof}
Theorem~\ref{Thm:main} shows that the approximate random projection operator $\hat{V}^\top \sx$ preserves the inner product $\langle g,\Sigma^3 f\rangle_\Cal{H}$ uniformly over all $f,g\in\Cal{H}$ at an approximation rate of $n^{-1/2}+d^{-1/2}$. 
%It therefore follows from Proposition~\ref{prop:Thetaconvergence} and Theorem~\ref{Thm:main} that $\sup\{|\langle \hat{V}^\top\sx f,\hat{V}^\top\sx g\rangle_2-\langle \Theta f,\Theta g\rangle_2|/\Vert f\Vert_\Cal{H}\Vert g\Vert_\Cal{H}:f,g\in\Cal{H}\}$ converges in probability to zero at a rate of $n^{-1/2}+d^{-1/2}$ as $n,d\rightarrow\infty$. 
%In addition, Theorem~\ref{Thm:main} can be used to measure the quality of approximation of $\Theta^*\Theta$ by $\sxs\hat{V}\hat{V}^\top\sx$ by noting that $\Vert \sxs\hat{V}\hat{V}^\top\sx-\Theta^*\Theta\Vert_{\text{op}}\le \Vert \sxs\hat{V}\hat{V}^\top\sx-\Sigma^3\Vert_{\text{op}}+\Vert \Theta^*\Theta-\Sigma^3\Vert_{\text{op}}$ and bounding the first term using \eqref{Eq:main} and the second term using Theorem~\ref{Thm:kL}, yielding an approximation rate of $n^{-1/2}+d^{-1/2}$. 

The following result provides a different angle by which $\Theta$ relates to $\hat V^\top S_{\bm X}$, by showing that, for all $\bm{\alpha}\in\bb{R}^d$ and $f\in\Cal{H}$, 
 $\langle \bm{\alpha},\hat{V}^\top\sx f\rangle_2$ converges in probability to $\langle \bm{\alpha},\Theta f\rangle_2$ at the rate of $d^{-1/2}$, provided $n$ is large enough for $\|\Sigma_n\|_{\text{op}}\lesssim \|\Sigma\|_{\text{op}}$. Recall that two operators $A, B: {\cal H} \to {\bb R}^d$ are equal (in a \emph{weak sense}) if $\forall f \in {\cal H},\, \forall 
 {\bm \alpha} \in {\bb R}^d$, we have $\iprod{\bm \alpha, Af}_2  = \iprod{\bm \alpha, Bf}_2$. 
%The following result directly deals with the convergence of $\hat{V}^\top\sx$ to $\Theta$ in a certain weak sense by showing the convergence of bilinear functionals, i.e., $\langle \bm{\alpha},\hat{V}^\top\sx f\rangle_2$ converging in probability to $\langle \bm{\alpha},\Theta f\rangle_2$ at an approximation rate of $d^{-1/2}$ for all $\bm{\alpha}\in\bb{R}^d$ and $f\in\Cal{H}$.
\begin{comment}
As can be seen from the proof of Theorem~\ref{Thm:main}, the quantity of interest is in fact equal to $\Vert \sxs\hat{V}\hat{V}^\top\sx-\Sigma^3\Vert_{\text{op}}$, using which it is easy see that $\Vert \sxs\hat{V}\hat{V}^\top\sx-\Theta^*\Theta\Vert_{\text{op}}$ also has an approximation rate of $n^{-1/2}+d^{-1/2}$ which follows from $\Vert \sxs\hat{V}\hat{V}^\top\sx-\Theta^*\Theta\Vert_{\text{op}}\le \Vert \sxs\hat{V}\hat{V}^\top\sx-\Sigma^3\Vert_{\text{op}}+\Vert \Theta^*\Theta-\Sigma^3\Vert_{\text{op}}$ by bounding the first term using \eqref{Eq:main} and the second term using Theorem~\ref{Thm:kL}. 
\end{comment}
\begin{thm}[Convergence of random projection operators]\label{thm:proj}
Define $s_\Sigma=\frac{\emph{tr}(\Sigma)}{\Vert\Sigma\Vert_{\emph{op}}}$. For any $\bm{\alpha}\in\bb{R}^d$, $f\in\Cal{H}$, $\tau>0$ and $$n\ge \kappa s_\Sigma\tau\left(\frac{32}{\Vert\Sigma\Vert_{\emph{op}}}\vee 1\right),$$
% 8\kappa\Vert\Sigma\Vert_{\emph{op}} \left(\frac{49}{\Vert \Sigma\Vert^2_{\emph{op}}}\vee 16\vee s^2_\Sigma\right)\tau',$$ 
with probability at least $1-4e^{-\tau}$ jointly over the choice of 
$\{\hat{v}_i\}^d_{i=1}$, $\{v_i\}^d_{i=1}$ and $\{X_i\}^n_{i=1}$:
$$\left|\langle \bm{\alpha},\hat{V}^\top\sx f\rangle_2-\langle \bm{\alpha},\Theta f\rangle_2\right|\le \frac{16\sqrt{2\tau}\Vert \bm{\alpha}\Vert_2\Vert f\Vert_\Cal{H}\left(\Vert \Sigma\Vert^{3/2}_{\emph{op}}\vee \Vert \Sigma\Vert^{5/4}_{\emph{op}}\right)}{\sqrt{d}}
%\left(1+\left(\frac{\kappa s_\Sigma\tau}{n}\right)^{\frac{1}{4}}\right)
.$$
%where $\tau' \doteq \tau+\log(2(s_\Sigma\vee s_{\Sigma^3}))$.
\end{thm}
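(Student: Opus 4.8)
The plan is to reduce the operator-level comparison to a one-dimensional coupling between the two relevant scalar Gaussians. Writing out the definitions of $\Theta$ and $\hat{V}^\top\sx$ gives
$$\iprod{\bm{\alpha},\hat{V}^\top\sx f}_2-\iprod{\bm{\alpha},\Theta f}_2=\frac{1}{\sqrt{d}}\sum_{i=1}^d\alpha_i D_i,\qquad D_i\doteq\frac{1}{n}\iprod{\hat{v}_i,\sx f}_2-\iprod{v_i,f}_\Cal{H},$$
so it suffices to control each $D_i$ and then exploit the sum over $d$ coordinates. Conditioned on $\bm{X}$, the scalar $\frac{1}{n}\iprod{\hat{v}_i,\sx f}_2$ is centered Gaussian since $\hat{v}_i\sim\Cal{N}(0,\frac{1}{n}\bm{K}^2)$, and $\iprod{v_i,f}_\Cal{H}$ is centered Gaussian since $v_i\sim\Cal{N}_\Cal{H}(0,\Sigma^3)$. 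I would couple the two families $\{\hat{v}_i\}$ and $\{v_i\}$ so that, for each $i$, these two scalars are driven by a common standard Gaussian seed $\eta_i$ (the monotone coupling of two centered Gaussians), with the $\eta_i$ drawn i.i.d.\ across $i$. This is the step that justifies the phrase ``jointly over the choice of $\{\hat{v}_i\}$, $\{v_i\}$'' and is the main conceptual point: the earlier quadratic results (Theorem~\ref{Thm:main}, Proposition~\ref{prop:Thetaconvergence}) only match second moments and cannot by themselves bound a signed linear functional, which is exactly why an explicit coupling is required here rather than a triangle inequality through $\Sigma^3$.

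Under this coupling, $D_i=(\sigma_1-\sigma_2)\eta_i$, where $\sigma_1^2$ and $\sigma_2^2$ are the variances of the two scalars and are deterministic given $\bm{X}$ and $f$. I would then identify these variances in closed form. Using $\hat{M}=\sx\Sigma_n\sxs$ together with $\sxs\sx=n\Sigma_n$ and $\bm{K}=\sx\sxs$ (all noted in Section~\ref{Sec:algorithm} and the proof of Theorem~\ref{Thm:main}), a direct computation gives $\sigma_1^2=\frac{1}{n^2}(\sx f)^\top\hat{M}(\sx f)=\iprod{f,\Sigma_n^3 f}_\Cal{H}$, while $\sigma_2^2=\iprod{f,\Sigma^3 f}_\Cal{H}$. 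Hence
$$|\sigma_1-\sigma_2|\le\sqrt{|\sigma_1^2-\sigma_2^2|}\le\Vert f\Vert_\Cal{H}\sqrt{\Vert\Sigma_n^3-\Sigma^3\Vert_{\text{op}}},$$
using $|\sqrt{a}-\sqrt{b}|\le\sqrt{|a-b|}$ for $a,b\ge 0$ and $|\iprod{f,(\Sigma_n^3-\Sigma^3)f}_\Cal{H}|\le\Vert\Sigma_n^3-\Sigma^3\Vert_{\text{op}}\Vert f\Vert^2_\Cal{H}$.

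It remains to reuse the concentration of $\Vert\Sigma_n^3-\Sigma^3\Vert_{\text{op}}$ already established as \eqref{Eq:sigma3}. Under the two regimes of the hypothesis $n\ge\kappa s_\Sigma\tau(32/\Vert\Sigma\Vert_{\text{op}}\vee 1)$, the bound $\Vert\Sigma_n^3-\Sigma^3\Vert_{\text{op}}\le 28\Vert\Sigma\Vert^2_{\text{op}}\sqrt{2\kappa\,\text{tr}(\Sigma)\tau/n}$ simplifies, via $\text{tr}(\Sigma)=s_\Sigma\Vert\Sigma\Vert_{\text{op}}$, to a multiple of $\Vert\Sigma\Vert^3_{\text{op}}$ when $\Vert\Sigma\Vert_{\text{op}}\le 32$ and of $\Vert\Sigma\Vert^{5/2}_{\text{op}}$ otherwise, so that $\sqrt{\Vert\Sigma_n^3-\Sigma^3\Vert_{\text{op}}}\lesssim\Vert\Sigma\Vert^{3/2}_{\text{op}}\vee\Vert\Sigma\Vert^{5/4}_{\text{op}}$, holding with probability $1-2e^{-\tau}$ over $\{X_i\}$. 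Finally, conditioned on $\bm{X}$, the remaining randomness is $\frac{1}{\sqrt{d}}\sum_i\alpha_i\eta_i\sim\Cal{N}(0,\Vert\bm{\alpha}\Vert_2^2/d)$, whose Gaussian tail gives $|\frac{1}{\sqrt{d}}\sum_i\alpha_i\eta_i|\le\Vert\bm{\alpha}\Vert_2\sqrt{2\tau/d}$ with probability $1-2e^{-\tau}$; multiplying by $|\sigma_1-\sigma_2|$ and taking a union bound ($1-4e^{-\tau}$ in total) yields the claim with the constant absorbed into $16$. Note that no lower bound on $d$ appears, consistent with the fact that this argument avoids the matrix concentration in $d$ used for Theorem~\ref{Thm:main}. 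The main obstacle is making the coupling fully rigorous: constructing a single joint law of $\{v_i\}\subset\Cal{H}$ and $\{\hat{v}_i\}\subset\bb{R}^n$ with the prescribed Gaussian marginals whose $f$-directional projections share $\eta_i$, which I would handle by coupling only the two relevant scalar projections (appealing to Proposition~\ref{pro:gauss} for conditional Gaussianity of $\sx u_i$ and $\hat v_i$) and completing each vector with independent Gaussian components in the orthogonal complement.
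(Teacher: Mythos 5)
Your variance computations ($\sigma_1^2=\langle f,\Sigma_n^3 f\rangle_\Cal{H}$ and $\sigma_2^2=\langle f,\Sigma^3 f\rangle_\Cal{H}$) are correct and agree with the paper's, but the central step---replacing the pair $\bigl(\tfrac1n\langle\hat v_i,\sx f\rangle_2,\ \langle v_i,f\rangle_\Cal{H}\bigr)$ by $(\sigma_1\eta_i,\sigma_2\eta_i)$ with a common seed $\eta_i$---changes the statement being proved. In the paper's construction the families $\{v_i\}$ and $\{\hat v_i\}$ are drawn independently of one another (conditionally on $\bm X$): they are defined once in Section~\ref{Sec:algorithm}, with no coupling and no dependence on $f$. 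Under that joint law $D_i$ is the difference of two \emph{independent} centered Gaussians, hence distributed as $\Cal{N}(0,\sigma_1^2+\sigma_2^2)$, not as $(\sigma_1-\sigma_2)\eta_i$; its typical size is of order $\sqrt{\sigma_1^2+\sigma_2^2}\approx\sqrt{2}\,\Vert\Sigma\Vert_{\text{op}}^{3/2}\Vert f\Vert_\Cal{H}$, which does not vanish as $n\to\infty$. Your bound, proportional to $\sqrt{\Vert\Sigma_n^3-\Sigma^3\Vert_{\text{op}}}\to 0$, is therefore false for the actual random objects; what you establish is an existence-of-coupling (Wasserstein-type) statement, and one in which the coupling must be chosen after seeing $f$. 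Your stated motivation for the coupling is also off target: no coupling is needed to bound a signed linear functional here, precisely because the difference of two independent Gaussians is again Gaussian and concentrates around zero at scale $\sqrt{(\sigma_1^2+\sigma_2^2)\cdot 2\tau/d}$.

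That last observation is the paper's proof: conditioned on $\bm X$, $\langle\bm{\alpha},\hat V^\top\sx f-\Theta f\rangle_2\sim\Cal{N}\bigl(0,\tfrac1d\Vert\bm{\alpha}\Vert_2^2\langle f,(\Sigma^3+\Sigma_n^3)f\rangle_\Cal{H}\bigr)$, the scalar Gaussian tail supplies the $\sqrt{2\tau/d}$ factor, and $\langle f,(\Sigma^3+\Sigma_n^3)f\rangle_\Cal{H}\le 2\Vert\Sigma\Vert_{\text{op}}^3\Vert f\Vert_\Cal{H}^2+\Vert f\Vert_\Cal{H}^2\Vert\Sigma_n^3-\Sigma^3\Vert_{\text{op}}$ is then controlled via \eqref{Eq:sigma3}, exactly as in your final step. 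This is consistent with the stated bound, which does not decay in $n$, and with the remark after the theorem that $d\to\infty$ is required for convergence. Your bookkeeping for the two regimes of the hypothesis on $n$ and the resulting $\Vert\Sigma\Vert_{\text{op}}^{3/2}\vee\Vert\Sigma\Vert_{\text{op}}^{5/4}$ factor is fine and mirrors the paper's; the gap is the unjustified---and, for the paper's construction, incorrect---coupling.
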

\begin{proof}
Note that $\langle \bm{\alpha},\hat{V}^\top\sx f\rangle_2=\langle \hat{V}\bm{\alpha},\sx f\rangle_2=\frac{1}{n\sqrt{d}}\left\langle\sum^d_{i=1}\alpha_i\hat{v}_i,\sx f\right\rangle_2$. Since $(\hat{v}_i)\stackrel{i.i.d.}{\sim}\Cal{N}(0,\frac{1}{n}\bm{K}^2)$, conditioned on $\bm{X}$ it follows that $$\frac{1}{n\sqrt{d}}\left\langle\sum^d_{i=1}\alpha_i\hat{v}_i,\sx f\right\rangle_2\sim\Cal{N}\left(0,\frac{1}{n^3d}\Vert\bm{\alpha}\Vert^2_2\langle \sx f,\bm{K}^2\sx f\rangle_2\right)=\Cal{N}\left(0,\frac{1}{d}\Vert\bm{\alpha}\Vert^2_2\langle f,\Sigma^3_n f\rangle_\Cal{H}\right),$$
where we used $\bm{K}=\sx\sxs$ and $n\Sigma_n=\sxs\sx$. On the other hand, $\langle\bm{\alpha},\Theta f\rangle_2=\frac{1}{\sqrt{d}}\sum^d_{i=1}\alpha_i\langle v_i,f\rangle_\Cal{H}\sim \Cal{N}\left(0,\frac{1}{d}\Vert\bm{\alpha}\Vert^2_2\langle f,\Sigma^3 f\rangle_\Cal{H}\right)$ which follows from the fact that $(v_i)^d_{i=1}\stackrel{i.i.d.}{\sim}\Cal{N}_{\Cal{H}}(0,\Sigma^3)$ which in turn implies $\langle v_i,f\rangle_\Cal{H}\sim \Cal{N}(0,\langle f,\Sigma^3 f\rangle_\Cal{H})$. Therefore $\langle \bm{\alpha},\hat{V}^\top\sx f-\Theta f\rangle_2\sim \Cal{N}\left(0,\frac{1}{d}\Vert\bm{\alpha}\Vert^2_2\langle f,(\Sigma^3+\Sigma^3_n)f\rangle_\Cal{H}\right)$ 
%and so $\frac{\langle \bm{\alpha},\hat{V}^\top\sx f-\Theta f\rangle_2}{\Vert\bm{\alpha}\Vert_2}\sim \Cal{N}\left(0,\frac{1}{d}\langle f,(\Sigma^3+\Sigma^3_n)f\rangle_\Cal{H}\right)$ 
conditioned on $\bm{X}$. For $Y\sim \Cal{N}(0,\sigma^2)$, the Gaussian concentration inequality yields that for any $\tau>0$, with probability at least $1-2e^{-\tau}$, $|Y|\le \sqrt{2\sigma^2\tau}$. Hence it follows that for any $\tau>0$, 
with probability at least $1-2e^{-\tau}$ jointly over $\{v_i\}^d_{i=1}$, $\{\hat{v}_i\}^d_{i=1}$ and conditioned on $\bm{X}$, we obtain
\begin{eqnarray}\left|\langle \bm{\alpha},\hat{V}^\top\sx f-\Theta f\rangle_2\right|&{}\le{}& \Vert\bm{\alpha}\Vert_2\sqrt{\frac{2\tau\langle f,(\Sigma^3+\Sigma^3_n)f\rangle_\Cal{H}}{d}}\nonumber\\
&{}\le{}& \Vert\bm{\alpha}\Vert_2\left(\frac{2\Vert\Sigma\Vert^{3/2}_{\text{op}}\Vert f\Vert_\Cal{H}\sqrt{\tau}}{\sqrt{d}}+\frac{\Vert\Sigma^3-\Sigma^3_n\Vert^{1/2}_{\text{op}}\Vert f\Vert_\Cal{H}\sqrt{2\tau}}{\sqrt{d}}\right),%\nonumber
\label{Eq:proj}
\end{eqnarray} 
which follows from the fact that
$\langle f,(\Sigma^3+\Sigma^3_n)f\rangle_\Cal{H}=2\langle f,\Sigma^3 f\rangle_\Cal{H}+\langle f,(\Sigma^3_n-\Sigma^3)f\rangle_\Cal{H}\le 2\Vert \Sigma\Vert^3_{\text{op}}\Vert f\Vert^2_\Cal{H}+\Vert f\Vert^2\Vert \Sigma^3_n-\Sigma^3\Vert_{\text{op}}$. Therefore, by unconditioning w.r.t.~$\bm{X}$, \eqref{Eq:proj} holds with probability at least $1-2e^{-\tau}$ jointly over the choice of 
$\{\hat{v}_i\}^d_{i=1}$, $\{v_i\}^d_{i=1}$ and $\{X_i\}^n_{i=1}$. The result therefore follows by invoking \eqref{Eq:sigma3} to bound $\Vert \Sigma^3-\Sigma^3_n\Vert_{\text{op}}$ in \eqref{Eq:proj}. 
\end{proof}

% 
% {\color{red} In the above you assume $n$ is large enough so that the multiplicative term $1+ O(1/n)$ should be a constant, so why not just write it that way? Also, can you please double check the setting of $n$?} 

In Theorem~\ref{thm:proj}, we require $d\rightarrow\infty$ to achieve $\langle \bm{\alpha},\hat{V}^\top\sx f\rangle_2\rightarrow \langle \bm{\alpha},\Theta f\rangle_2$ in probability for all $\bm{\alpha}\in\bb{R}^d$ and $f\in\Cal{H}$, for sufficiently large $n$. Instead in the following result, we keep $d$ fixed, and show the convergence in distribution of $\hat{V}^\top\sx f$ 
to $\Theta f$ as $n\rightarrow\infty$ for all $f\in\Cal{H}$ as $n \to \infty$. 

\begin{thm}\label{prop:convergenceinD}
For all $f \in \cal H$ we have that  
$$\hat V^\top S_{\bm X} f \xrightarrow{\text{dist}} \Theta f, \text{ as } n \to \infty.$$
\end{thm}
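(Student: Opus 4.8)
The plan is to use that, \emph{conditioned on} the sample $\bm X$, the vector $\hat V^\top \sx f$ is \emph{exactly} a centred spherical Gaussian in $\bb R^d$ whose variance is a continuous functional of $\Sigma_n$, and then to let $\Sigma_n\to\Sigma$. Convergence in distribution will follow from a characteristic-function (L\'evy continuity) argument, with no coupling of the two operators needed. First I would identify the conditional law. Since $(\hat v_i)^d_{i=1}\stackrel{i.i.d.}{\sim}\Cal N(0,\frac1n\bm K^2)$ given $\bm X$, each coordinate $\frac{1}{n\sqrt d}\iprod{\hat v_i,\sx f}_2$ is centred Gaussian and the $d$ coordinates are mutually independent, so $\hat V^\top\sx f\mid\bm X\sim\Cal N(0,\sigma_n^2 I_d)$ with $\sigma_n^2=\frac{1}{n^3 d}\iprod{\sx f,\bm K^2\sx f}_2$. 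Using $\bm K=\sx\sxs$ and $\sxs\sx=n\Sigma_n$, I simplify $\iprod{\sx f,\bm K^2\sx f}_2=\iprod{f,(\sxs\sx)^3 f}_{\Cal H}=n^3\iprod{f,\Sigma_n^3 f}_{\Cal H}$, which gives $\sigma_n^2=\frac1d\iprod{f,\Sigma_n^3 f}_{\Cal H}$. On the target side, $(v_i)^d_{i=1}\stackrel{i.i.d.}{\sim}\Cal N_{\Cal H}(0,\Sigma^3)$ forces $\iprod{v_i,f}_{\Cal H}\sim\Cal N(0,\iprod{f,\Sigma^3 f}_{\Cal H})$ independently across $i$, so $\Theta f\sim\Cal N(0,\sigma^2 I_d)$ with $\sigma^2=\frac1d\iprod{f,\Sigma^3 f}_{\Cal H}$ — the same spherical form with $\Sigma^3$ in place of $\Sigma_n^3$.

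Next I would invoke the operator consistency $\Vert\Sigma_n^3-\Sigma^3\Vert_{\text{op}}\xrightarrow{P}0$ as $n\to\infty$, which is exactly the content of \eqref{Eq:sigma3} (itself a consequence of Lemma~\ref{lem:bern}). Because $|\sigma_n^2-\sigma^2|\le\frac1d\Vert f\Vert^2_{\Cal H}\,\Vert\Sigma_n^3-\Sigma^3\Vert_{\text{op}}$, this yields $\sigma_n^2\xrightarrow{P}\sigma^2$, i.e.\ the random conditional variance settles to the deterministic target variance.

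Finally I would pass to characteristic functions. For fixed $t\in\bb R^d$, the conditional characteristic function of $\hat V^\top\sx f$ given $\bm X$ is $e^{-\frac12\sigma_n^2\Vert t\Vert_2^2}$, so by the tower property the unconditional characteristic function equals $\bb E_{\bm X}\,e^{-\frac12\sigma_n^2\Vert t\Vert_2^2}$. The integrand is bounded by $1$ and, by the continuous mapping theorem, converges in probability to $e^{-\frac12\sigma^2\Vert t\Vert_2^2}$; since a bounded sequence converging in probability also converges in $L^1$, the expectations converge to $e^{-\frac12\sigma^2\Vert t\Vert_2^2}$, which is precisely the characteristic function of $\Theta f\sim\Cal N(0,\sigma^2 I_d)$. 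L\'evy's continuity theorem then delivers $\hat V^\top\sx f\xrightarrow{\text{dist}}\Theta f$. The only real obstacle is handling the two independent randomness sources — the sketch $\{\hat v_i\}$ and the sample $\bm X$ — simultaneously; the conditioning/tower step resolves this cleanly by absorbing the sketch randomness into an exact Gaussian, after which only the \emph{deterministic} limit $\sigma^2$ survives, so that convergence $\sigma_n^2\to\sigma^2$ merely in probability suffices.
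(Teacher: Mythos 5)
Your proposal is correct and follows essentially the same route as the paper: identify the exact conditional Gaussian law $\Cal N\left(0,\tfrac1d\iprod{f,\Sigma_n^3 f}_{\Cal H}I_d\right)$ given $\bm X$, use $\Vert\Sigma_n^3-\Sigma^3\Vert_{\text{op}}\to 0$ to send the conditional variance to $\tfrac1d\iprod{f,\Sigma^3 f}_{\Cal H}$, and conclude via characteristic functions, the tower property, and bounded convergence. The only cosmetic difference is that the paper upgrades to almost sure convergence via Borel--Cantelli before applying bounded convergence, whereas you work directly with convergence in probability plus uniform boundedness, which is equally valid.
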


\section{K-JL relations to KPCA and K-$k$-means}
\label{sec:KPCA}
In the next two subsections we argue, through simple corollaries to Theorem \ref{Thm:main} and experimental evaluation, that 
K-JL preserves geometric and clustering aspects of kernel PCA (KPCA) \citep{mika1999kernel,Scholkopf-98} and kernel $k$-means (K-$k$-means) \citep{Dhillon-04}, at the cheaper costs of random projection, whenever favorable conditions for KPCA, resp. K-$k$-means hold in practice. 

Recall that, given a large dataset ${\bm X}_N$ of size $N$, K-JL consists of remapping each $X_i \in {\bm X}_N$ as 
$\hat V^\top S_{\bm X} K(\cdot, X_i)$, where $\bm X$ is the size $n$ subsample of ${\bm X}_N$ used to compute 
$\hat V^\top\doteq \frac{1}{n\sqrt{nd}}Z\bm{K}$. 

\subsection{Preserving Low-dimensional Separation}
In this section, we develop the intuition that, Kernel JL has similar advantages as Kernel PCA (KPCA)
under situations favorable to KPCA. In particular, KPCA works under the assumption that the data in feature space 
$\{K(\cdot, x): x \in \cal X\}$ lies close to a lower-dimensional eigenspace of the covariance operator 
$\Sigma$ \citep{Blanchard-07,mika1999kernel,Scholkopf-98}. We formalize this assumption below.

\begin{assumption}[KPCA]
\label{assumption:KPCA}
Let $\Sigma = \sum_i \lambda_i (f_i  \otimes_{\cal H} f_i)$ denote a spectral decomposition of $\Sigma$ (with non-increasing eigenvalues $\{\lambda_i\}$, and assume 
$\bb E_{X\sim\rho_X} K(\cdot, X) = 0$.  For any $k \in \bb N$, let $P_k$ denote the projection operator onto $\text{span}\{f_i: i \in [k]\}$. There exists $k \in \bb N$, and $0<\epsilon, \eta<1$ such that 
$$ \rho_X\left \{x: \|P_k K(\cdot, x) \|_{\cal H}^2 \geq (1-\epsilon) \| K(\cdot, x)\|_{\cal H}^2\right\} \geq 1-\eta.$$
\end{assumption}

We start with some theoretical intuition using the following formal example.
\begin{example}[Well-separated subsets of feature space] \label{ex:1}
Let Assumption \ref{assumption:KPCA} hold for some $0<\epsilon, \eta < 1$. Let $\tau > 1$. 
Let ${\bm X}_N$ denote an i.i.d. sample of size $N$ from $\rho_X$ (not necessarily independent from 
$\bm X$, since the results of Theorem \ref{Thm:main} hold uniformly over $\cal H$). 

The following holds with probability at least $1- e^{-\tau} - N\eta$, over any 
subsets ${\cal F}, {\cal G}$ of $\{K(\cdot, x): x\in {\bm X}_N \}$ satisfying  
$\min_{f \in {\cal F}, \, g \in {\cal G}} \| f- g\|^2_{\cal H} = \Delta$, for some separation $\Delta = \Delta({\cal F}, {\cal G}) > 0$. We have for some $C_1, C_2$,  both functions of $(K, \rho_X)$, that for $n\land d > C_1$: 
\begin{equation} 
\inf_{f \in {\cal F}, \, g \in {\cal G}} \left \|{\hat V}^\top\sx \left(f-g\right) \right\|^2
\geq \lambda_k^3\cdot (\Delta - 2\epsilon\cdot \kappa)- C_2\sqrt{\frac{\tau}{n\land d}}. \label{eq:pca-lbound}
\end{equation}
On the other hand, independent of Assumption \ref{assumption:KPCA}, we have with probability at least $1-e^{-\tau}$ that for all $f, g \in \cal H$ we have the upper-bound
\begin{align}
\left \|{\hat V}^\top\sx \left(f-g\right) \right\|^2 \leq \lambda_1^3 \cdot \|f- g\|^2_{\cal H} + C_2\sqrt{\frac{\tau}{n\land d}}. \label{eq:pca-ubound}
\end{align}
\end{example}
The above is obtained by noticing that, we have for any $h \in \cal H$ that 
\begin{align*} 
\iprod{h, \Sigma^3 h}_\Cal{H} &= \sum_{i=1}^\infty \lambda_i^3 \iprod{h, f_i}^2
\geq \lambda_k^3 \cdot \sum_{i = 1}^k \iprod{h, f_i}^2 = \lambda_k^3 \cdot \left \| P_k h \right\|_{\cal H}^2, \text{ and similarly }  \\
 \iprod{h, \Sigma^3 h}_\Cal{H} &\leq  \lambda_1^3 \cdot \sum_{i=1}^\infty \iprod{h, f_i}^2 = \lambda_1^3 \cdot \left \| h \right\|^2_{\cal H}. 
\end{align*} 
Now, under Assumption \ref{assumption:KPCA} and Theorem \ref{Thm:main}, take $h \doteq f -g$ to obtain the statements of \eqref{eq:pca-lbound} and \eqref{eq:pca-ubound}. For \eqref{eq:pca-lbound}, notice further that, under Assumption \ref{assumption:KPCA}, we have with probability at least $1-N\eta$ that 
\begin{align*}
\pushQED{\qed}     
\|P_k (f -g)\|_{\cal H}^2 &= \|f -g\|_{\cal H}^2 - \|P_k^\perp (f -g)\|_{\cal H}^2
\geq \|f -g\|_{\cal H}^2 -2 \left ( \|P_k^\perp f\|_{\cal H}^2  + \|P_k^\perp g\|_{\cal H}^2 \right ) \\
&\geq \|f -g\|_{\cal H}^2 - 2\epsilon\cdot \kappa. \qedhere
\popQED
\end{align*}

From the above, if the two subsets $\cal F, \cal G$ are well-separated in feature space under KPCA, they remain well-separated after K-JL, provided the \emph{condition number} $\lambda_1/\lambda_k$ is not too large: distances are rescaled below by 
$\lambda_k^3$, but rescaled above by $\lambda_1^3$. In the favorable 
case  where $\lambda_1/\lambda_k \approx 1$, we see from \eqref{eq:pca-lbound} and \eqref{eq:pca-ubound} that K-JL should achieve similar separation properties as KPCA, provided $\Delta$ is large w.r.t. to interpoint distances in $\cal F$ and $\cal G$. 
This intuition is formalized in the following example where we consider a scale-free notion of separation. 

\begin{figure}[tb]
\setlength\abovecaptionskip{-0.7\baselineskip}
\centering 
\begin{minipage}[b]{0.21\textwidth}
\includegraphics[width=\textwidth]{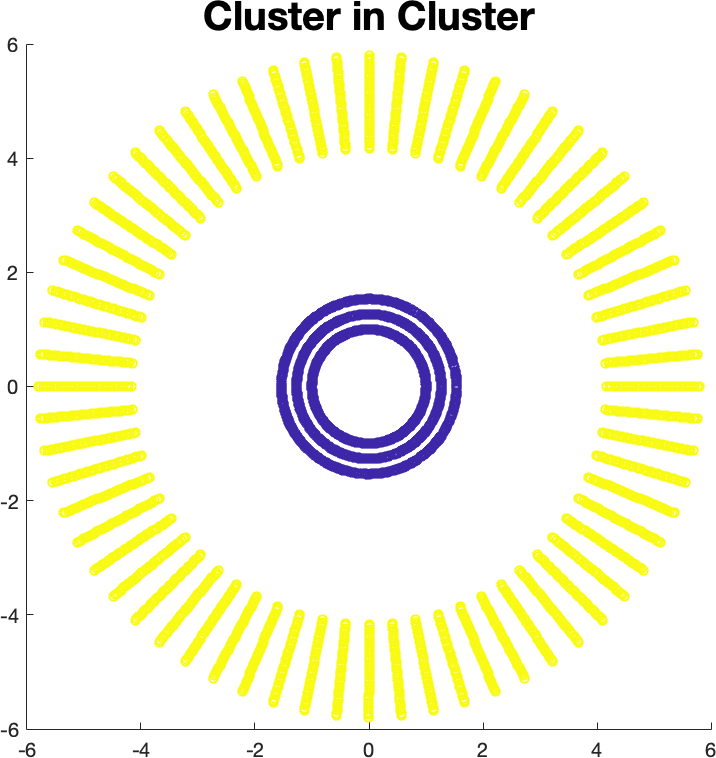}
\end{minipage}
\hspace{3mm}
\begin{minipage}[b]{0.21\textwidth}
\includegraphics[width=\textwidth]{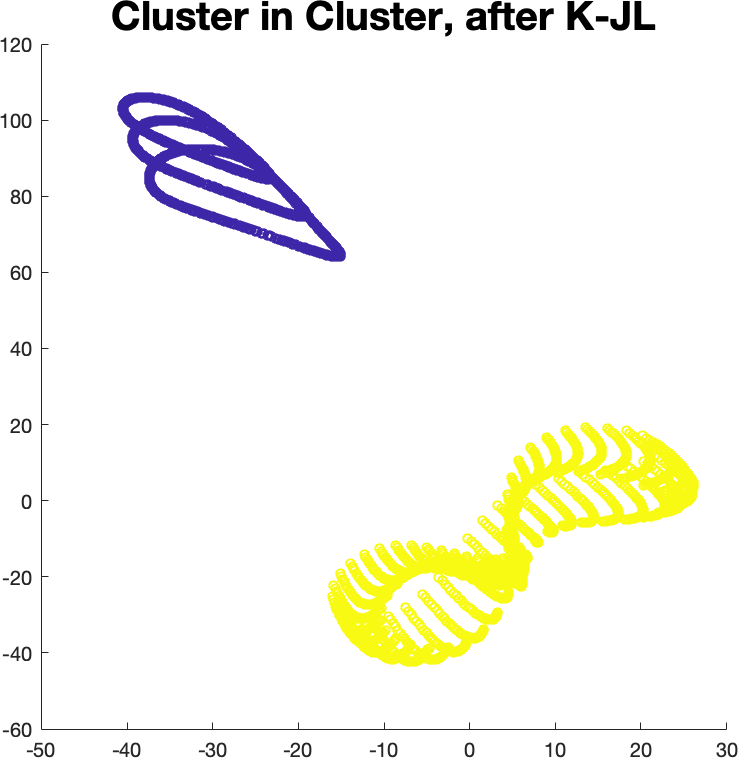}
\end{minipage}
\hspace{10mm}
\begin{minipage}[b]{0.21\textwidth}
\includegraphics[width=\textwidth]{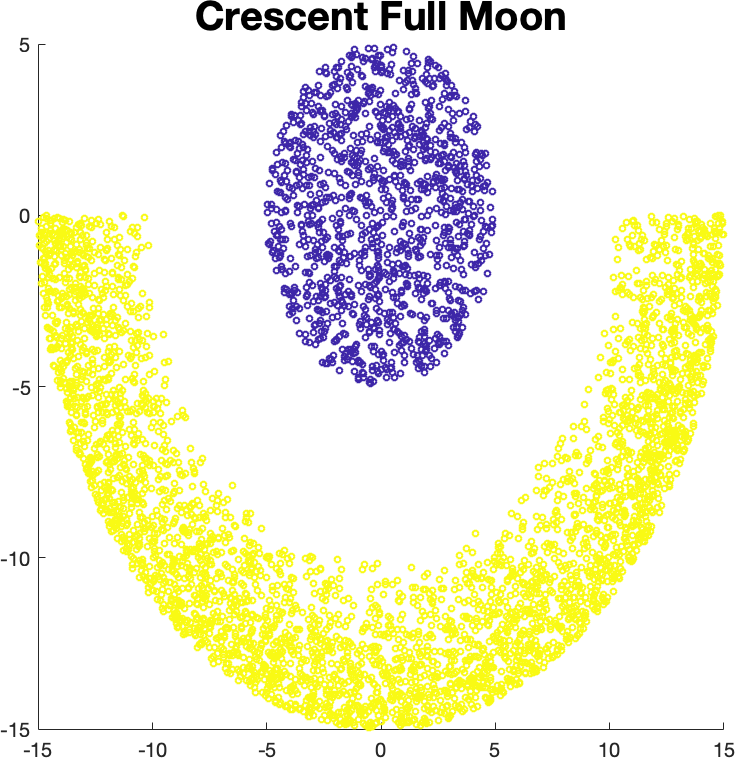}
\end{minipage}
\hspace{3mm}
\begin{minipage}[b]{0.21\textwidth}
\includegraphics[width=\textwidth]{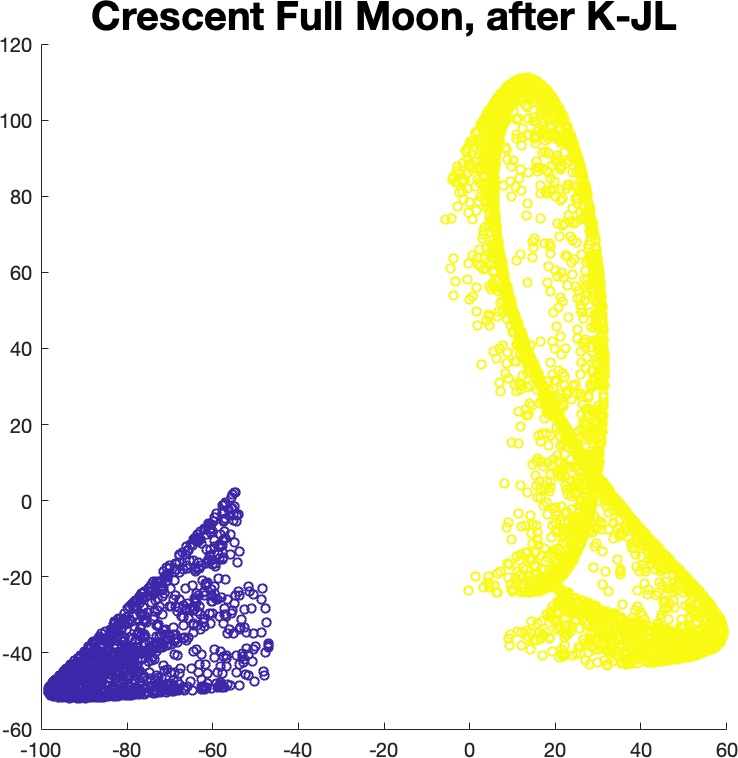}%\vspace{-3mm}
\end{minipage}\vspace{7mm}
\caption{\small The data \emph{Cluster in Cluster} and \emph{Crescent Full Moon} each have 5000 points, and are shown before and after K-JL projection. K-JL behaves as a random version of KPCA in how it separates clusters. }
\label{fig:pca}
\vspace{-3mm}
\end{figure}

\paragraph{Simulations.} Next, we verify the above insights empirically. In particular, an empirical fact about KPCA, justifying its popularity, is that it can reveal separable subsets $\cal F, \cal G$ (in feature space) of data 
${\bm X}_N$ that were not separable in original space $\cal X$. Per the above insights, this should also be the case with K-JL. 
In Figure \ref{fig:pca} we show projection results, where, given $N = 5000$ points, we use a subsampling size $n = 100$ and projection dimension $d = 2$ to verify the intuition that K-JL (centralized) is able to separate subsets of data on typical examples (e.g., cluster in cluster) where KPCA is known to work well \citep{mika1999kernel,Scholkopf-98}.

\subsection{Preserving Clustering Properties}
In this section we argue that if the data is clusterable in feature space---an assumption underlying K-$k$-means, and 
uses of KPCA in clustering---then it remains clusterable after K-JL. 

To develop intuition, we formalize \emph{clusterability} in terms of the distribution $\rho_X$ being given as a mixture of distributions with sufficiently separated means. We adapt traditional arguments given in the work on clustering mixtures of Gaussians {\citep{dasgupta1999learning, kannan2005spectral, sanjeev2001learning}} to the square norm $\iprod{f, \Sigma^3f}_{\cal H}$. In particular, these works develop the intuition that if the $k$ cluster means are sufficiently separated, they then lie close to a $k$-dimensional subspace close to the top $k$-eigenspace of the data covariance. Such intuition holds in general Hilbert space, 
and in the sequel we illustrate this in the case of 2 clusters, while similar arguments extend to multiple clusters.

\begin{example}[Clusterability of $\rho_X$]
%Let Assumption \ref{assumption:KPCA} hold for some $0<\epsilon, \eta < 1$. 
The following holds with probability at least $1- e^{-\tau}$, $\tau >1$. 

Let $\rho_X = \pi_1 \rho_{X, 1} + \pi_2 \rho_{X, 2}, \, 0< \pi_1, \pi_2< 1, \, \pi_1 + \pi_2 = 1$;  let $\mu_1, \mu_2$, $\Sigma_1, \Sigma_2$ are respectively the means and covariance operators of $\rho_{X, 1}, \rho_{X, 2}$, i.e., for $i =1, 2$, 
$$\mu_i = {\bb E}_{\rho_{X, i}} K(\cdot, X) \text{ and } 
\Sigma_i = {\bb E}_{\rho_{X, i}} K(\cdot, X)\otimes_{\cal H} K(\cdot, X) - \mu_i \otimes_{\cal H}\mu_i.$$

Suppose the maximum eigenvalues of $\Sigma_1, \Sigma_2$ are upper-bounded by $\sigma$. We have for some $C_1, C_2$,  both functions of $(K, \rho_X)$, that for $n\land d > C_1$: 
\begin{align}
\left \| {\hat V}^\top\sx \left(\mu_1- \mu_2\right) \right\|^2 \geq 
\lambda_1^3 \left( \|\mu_1 - \mu_2\|^2_{\cal H} - \frac{1}{\pi_1 \pi_2}\sigma \right) -  C_2\sqrt{\frac{\tau}{n\land d}}. 
\label{eq:clustering}
\end{align}
In other words, separation between cluster means are maintained. On the other hand, as a consequence of \eqref{eq:pca-ubound}, 
inter-cluster distances are maintained (at the same scale $\lambda_1^3$). 
\end{example}

The above is a consequence of the following decomposition. Let $\gamma = \pi_1/\pi_2$, so that 
$\mu_2 = \gamma \mu_1$:
\begin{align*}
\Sigma &\doteq {\bb E}_{\rho_{X}} K(\cdot, X)\otimes_{\cal H} K(\cdot, X) 
= \pi_1\left(  \Sigma_1 + \mu_1 \otimes_{\cal H}\mu_1 \right) + \pi_2\left(  \Sigma_2 + \mu_2 \otimes_{\cal H}\mu_2 \right) \nonumber \\
& = \pi_1\Sigma_1 + \pi_2 \Sigma_2 + \left(\pi_1 + \pi_2\gamma^2 \right) \mu_1 \otimes_{\cal H}\mu_1 
= \pi_1\Sigma_1 + \pi_2 \Sigma_2 + \gamma \mu_1 \otimes_{\cal H}\mu_1.
\end{align*}
It follows from the above that 
\begin{align}
\lambda_1 &\doteq \iprod{f_1, \Sigma, f_1}_{\cal H} \leq \sigma + \gamma\iprod{f_1, (\mu_1 \otimes_{\cal H}\mu_1) f_1}_{\cal H}
= \sigma + \gamma\iprod{f_1, \mu_1}_{\cal H}^2, \text{ and} \label{eq:decompsigma1}\\
\lambda_1 &\geq \frac{1}{\|\mu_1\|^2_{\cal H}}\iprod{\mu_1, \Sigma \mu_1}_{\cal H} 
\geq \gamma\frac{1}{\|\mu_1\|^2_{\cal H}}\iprod{\mu_1, \mu_1}_{\cal H}^2
= \gamma\|\mu_1\|^2_{\cal H}. \label{eq:decompsigma2}
\end{align}
Combining \eqref{eq:decompsigma1} and \eqref{eq:decompsigma2}, it follows that 
$\iprod{f_1, \mu_1}_{\cal H}^2 \geq \|\mu_1\|^2_{\cal H} -\sigma/\gamma$. Noticing that 
$\mu_1 - \mu_2 = (1+\gamma)\mu_1$, we therefore obtain 
\begin{align*}
\iprod{(\mu_1 - \mu_2), \Sigma^3(\mu_1 - \mu_2)}_{\cal H} 
&= (1+\gamma)^2 \iprod{\mu_1, \Sigma^3, \mu_1}_{\cal H} \geq (1+\gamma)^2\lambda_1^3 \iprod{f_1, \mu_1}_{\cal H}^2 \\
&\geq \lambda_1^3\left(\|\mu_1 - \mu_2\|^2_{\cal H} - \frac{(1+\gamma)^2}{\gamma}\cdot \sigma \right) 
= \lambda_1^3 \left( \|\mu_1 - \mu_2\|^2_{\cal H} - \frac{1}{\pi_1 \pi_2}\sigma \right).
\end{align*}
\pushQED{\qed}     
Equation \eqref{eq:clustering} is then obtained by combining this last inequality with Theorem \ref{Thm:main}. \qedhere
\popQED

\paragraph{Experiments.}
We run clustering experiments on UCI datasets, ${\bm X}_N$ of sizes $N$, described in Table \ref{tab:datasets}. We compare K-JL (i.e. $k$-means after centralized K-JL) against $k$-means clustering after KPCA (centralized), and K-$k$-means. For KPCA we use a fast implementation where eigen-decomposition is done on the centralized gram matrix $\bm \bar K$ of a subsample of size $n$ to approximate the top $d$ eigenfunctions of the centralized gram-matrix ${\bm \bar K}_N$ on $N$ samples; that is, if $\bm{\alpha} \in {\bb R}^n$ is an eigenvector of $\bm \bar K$, then $x\in {\bm X}_N$ is mapped to $\sum_{i \in [n]}\alpha_i K(x, x_i)$, $x_i \in \bm X$. 

{\it Nystr\"om embedding.} For K-$k$-means
we use a fast Nystr\"om embedding $\bf \tilde K_N^{1/2}$, where $\bf \tilde K_N$ approximates the gram matrix  ${\bm K}_N$ on $N$ samples, using a rank $d$ pseudo-inverse ${\bm K}^{\dagger}_{(d)}$ of the gram-matrix $\bm K$ on $n$ subsamples \citep{kmeans-nyst-18,k-means_nys,Williams-01}. That is, we use 
$\tilde {\bm K}_N = {\bm K}_{(N, n)}{\bm K}^{\dagger}_{(d)}{\bm K}_{(N, n)}^\top$, where ${\bm K}_{(N, n)}$ denotes the gram-matrix between ${\bm X}_N$ and $\bm X$.  

\begin{table}[t]
	\centering
	\caption{\small Data description}
    \label{tab:datasets}
    \begin{tabular}{|c |c |c |c|}
    \hline 
    \it UCI Datasets & \it Size $N$ & \it Dimension & \it Num. of clusters \\
    \hline \hline 
    \textsc{Avila Bible} & 20867 (bible pages) & 10  & 12 (scribes)\\
    \hline 
    \textsc{IoT} & 40000 (traffic traces) & 115& 5 (devices)\\
    \hline 
    \textsc{Bank Notes} & 1372 (images) & 4 & 2 (forged or not)\\
	\hline
    \end{tabular}
\vspace{-3mm}
\end{table}

In all our implementations, $\bm X$ are $n$ random subsamples of ${\bm X}_N$. We use a Gaussian kernel 
$K(x, x') \doteq \exp\{-\|x - x'\|^2/\sigma^2\}$, where $\sigma$ is chosen as the 25th percentile of interpoint distances.\vspace{1mm}

\emph{Relative performance}. The results of Table \ref{tab:performance} validate our intuition that K-JL achieves similar clustering as K-$k$-means and KPCA, in faster preprocessing time (for the mapping of ${\bm X}_N$, as implemented in Matlab without further optimization of matrix multiplications). For all methods, we set $d = 10k$, where $k$ is the number of clusters, and $n = \max \{200, N/100\}$. All experiments are repeated 30 times, and mean and std of Rand Index (RI) are reported. Interestingly, K-JL also appears most stable in terms of RI: the higher instability of the other two methods is likely due to the fast-eigensolvers used in Matlab. 

\begin{table}[h!]
\vspace{-3mm}
	\centering
	\caption{\small Clustering Results: Preprocessing time / Rand Index {\bf (best 2 in bold)}.}
    \label{tab:performance}
    \begin{tabular}{|c |c| c |c |c|}
    \hline 
    \it UCI Datasets &  $k$-means &  K-$k$-means & KPCA & K-JL \\
    \hline \hline 
    \textsc{Avila Bible} &
    NA / .683 $\pm$.026   & .112s / {\bf  .728} $\pm$.003  & .103s / {\bf .725} $\pm$.002 & .086s / .718 $\pm$.002\\
    \hline 
    \textsc{IoT} & 
    NA / .548 $\pm$.086 &  .537s /  .745 $\pm$.039 & .500s / {\bf .759} $\pm$.024 &  .447s / {\bf .749} $\pm$.006 \\
    \hline 
    \textsc{Bank Notes} &
    NA / .507 $\pm$.0001  &.020s / {\bf  .529} $\pm$.067 &  .014s / .526 $\pm$.041 & .007s / {\bf  .527} $\pm$.031\\
	\hline
    \end{tabular}
\end{table}

\emph{Effects of $d$ and $n$.} In Figure \ref{fig:varyingND}, we vary $n$, $d$ on the IOT dataset, where to reduce running time we now set $N = 10000$ and use 10 repetitions (rather than 30 as above) per values of $d$ and $n$. Average RI are reported. The main take-home is that the methods are most sensitive to the choice of $n$.  We again observe that Kernel-JL appears overall most stable. 

\begin{figure}[ht]
\centering
\includegraphics[height = 4.5cm]{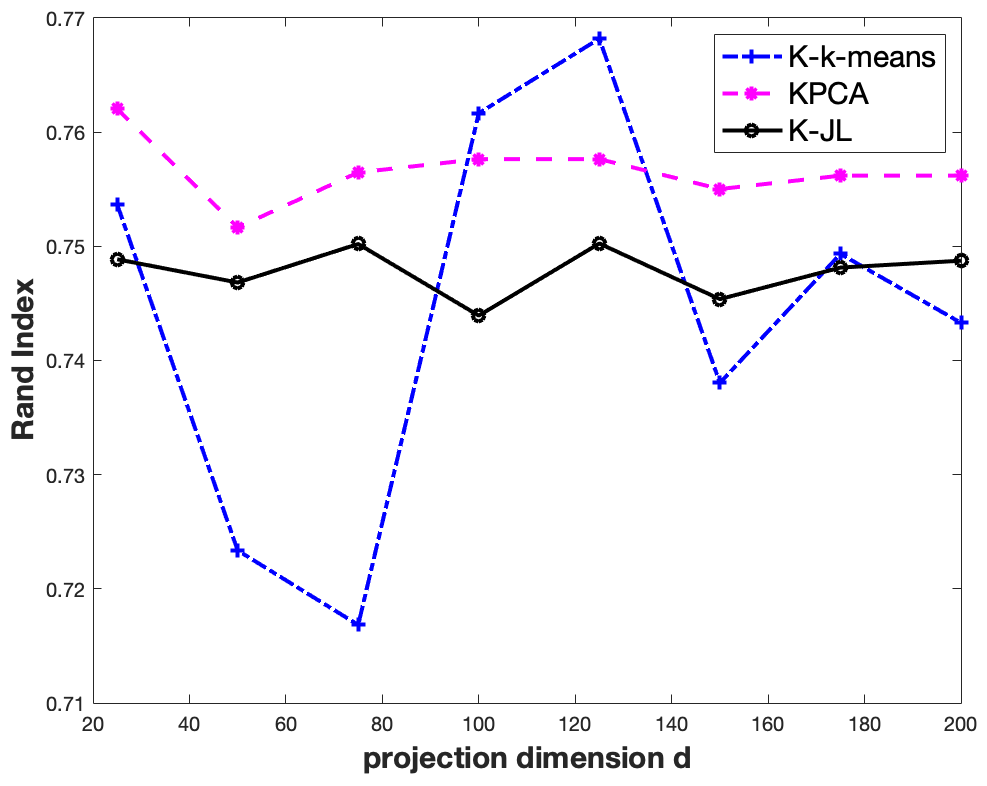}
\includegraphics[height = 4.5cm]{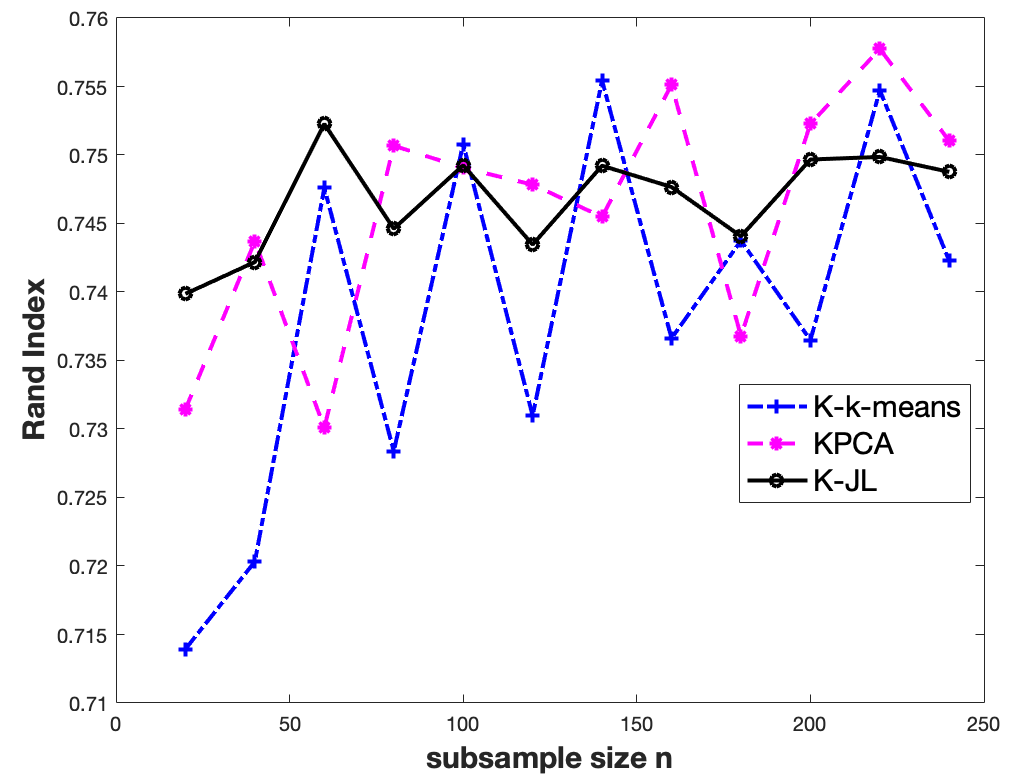}
\caption{\small Effects of $n$ and $d$, using $N =10000$ samples from the IoT dataset. Left, we fix $n = 200$ and vary $d$. Right, we fix $d = 10$ and vary $n$. The choice of subsample size $n$ seems most crucial.}
\label{fig:varyingND}
\end{figure}

\section*{Acknowledgments}
Samory Kpotufe and B. K. Sriperumbudur thank the National Science Foundation for support under, respectively,  grant id NSF-CPS-1739809, and NSF-DMS-1713011.

\bibliography{Reference}
\bibliographystyle{abbrvnat}
%\newpage

\appendix
\section{Proofs}
In this section, we present the missing proofs.
\subsection{Proof of Proposition~\ref{prop:Thetaconvergence}}\label{pro:sigma3}
%\begin{proof}	
Note that $A(f,g)\doteq\langle \Theta g,\Theta f\rangle_2-\langle g,\Sigma^3 f\rangle_\Cal{H}=\langle g,(\Theta^*\Theta -\Sigma^3)f\rangle_\Cal{H}$, where the adjoint of $\Theta$ is given by $\Theta^*:\bb{R}^d\rightarrow\Cal{H}$, $\bm{\alpha}\mapsto \frac{1}{\sqrt{d}}\sum^d_{i=1}\alpha_iv_i$. Therefore, $$\sup_{f,g\in\Cal{H}}\frac{|A(f,g)|}{\Vert f\Vert_\Cal{H}\Vert g\Vert_\Cal{H}}=\left\Vert \Theta^*\Theta-\Sigma^3\right\Vert_{\text{op}}.$$ Since $\Theta^*\Theta=\frac{1}{d}\sum^d_{i=1}v_i\otimes_{\Cal{H}}v_i$, the result is a direct application of Theorem~\ref{Thm:kL}.

\subsection{Proof of Theorem~\ref{prop:convergenceinD}}
%convergence in distribution (Remark~\ref{rem:2})}
\label{Sec:dist}
% \begin{pro}\label{prop:convergenceinD}
% For all $f \in \cal H$ we have that  
% $$\hat V^\top S_{\bm X} f \xrightarrow{\text{dist}} \Theta f, \text{ as } n \to \infty.$$
% \end{pro}
%\begin{proof}
We will show that the characteristic function $\Phi_n(t)$ of $\hat V^\top S_{\bm X} f$ convergences pointwise to the characteristic 
function $\Phi(t)$ of $\Theta f$. Note that $\Phi_n(t) = \mathbb{E}_{\bm X} \phi_n(t)$, where $\phi_n(t)$ is the characteristic function of $\hat V^\top S_{\bm X} f$ conditioned on $\bm X$. 

To start, write $\Theta f=\frac{1}{\sqrt{d}}\left(\langle v_1,f\rangle_\Cal{H},\ldots,\langle v_d,f\rangle_\Cal{H}\right)^\top\sim \Cal{N}\left(0,\frac{1}{d}\langle f,\Sigma^3 f\rangle_\Cal{H}\bm{I}_d\right)$ where $\bm{I}_d$ is the $d\times d$ identity matrix. This follows by noting that $\langle v_i,f\rangle_\Cal{H}\sim\Cal{N}(0,\langle f,\Sigma^3 f\rangle_\Cal{H})$ for all $i\in[d]$ and $v_i$'s are mutually independent. Thus $\Phi(t) \doteq \exp(-\langle f,\Sigma^3 f\rangle_\Cal{H}t^2/2d)$, the characteristic function of $\Cal{N}(0,\frac{1}{d}\langle f,\Sigma^3  f\rangle_\Cal{H})$. 

Similarly, conditioned on $\bm{X}$, 
$$\hat{V}^\top\sx f=\frac{1}{n\sqrt{d}}\left(\langle \hat{v}_1,\sx f\rangle_2,\ldots,\langle \hat{v}_d,\sx f\rangle_2\right)^\top\sim\Cal{N}\left(0,\frac{1}{d}\langle f,\Sigma^3_n f\rangle_\Cal{H}\bm{I}_d\right),$$ which follows by noting that for any $i\in[d]$, 
$$\langle \hat{v}_i,\sx f\rangle_2\sim \Cal{N}\left(0,\langle \sx f,\frac{1}{n}\bm{K}^2 \sx f\rangle_2\right)=\Cal{N}\left(0,n^2\langle f,\Sigma^3_n f\rangle_\Cal{H}\right),$$
where the last equality uses the fact that $\bm{K}=\sx\sxs$, $n\Sigma_n=\sxs\sx$, and that $\hat{v}_i$'s are mutually independent. Thus, $\phi_n(t):=\exp(-\langle f,\Sigma^3_n f\rangle_{\Cal{H}}t^2/2d)$. Clearly $\langle f,\Sigma^3_nf \rangle_\Cal{H}\stackrel{a.s.}{\rightarrow}\langle f,\Sigma^3 f\rangle_\Cal{H}$ since $|\langle f,\Sigma^3_nf \rangle_\Cal{H}-\langle f,\Sigma^3 f\rangle_\Cal{H}|\le \Vert f\Vert^2_\Cal{H}\Vert \Sigma^3_n-\Sigma^3\Vert_\text{op}\stackrel{a.s.}{\rightarrow} 0$ as $n\rightarrow\infty$, which is a consequence of exponential concentration (see~\eqref{Eq:sigma3})---convergence in probability---followed by an application of Borel-Cantelli lemma. Since $y\mapsto \exp(-yt^2/2d)$ is a continuous function, it follows from Corollary 6.3.1\emph{(i)} of \cite{Resnick-14} that
$\phi_n(t) \xrightarrow{\text{a.s.}} \Phi(t)$. Since $\phi_n(t)$ is bounded, Corollary 6.3.2 of \cite{Resnick-14} implies that 
${\bb E}_{\bm X} \phi_n (t) \to \bb{E}\Phi(t)$, i.e., $\Phi_n(t)\to \Phi(t)$ for all $t\in \bb{R}$ as $n\rightarrow\infty$.
% Now note that $\color{red}\Sigma_n \xrightarrow{\text{a.s.}} \Sigma$ where the limit is taken in operator norm: this is a consequence of %the SLLN for $\frac{1}{n}\sum^n_{i=1} \| K(\cdot,X_i)\otimes_\Cal{H} K(\cdot,X_i)\|_{\text{op}}$.
% exponential concentration {\color{red} of Theorem \ref{}} (convergence in probability) followed by an application of Borel-Cantelli. 
% 
% It follows, by continuity -- w.r.t. operator norm -- of $\Phi(t)$ as a function of $\Sigma$, that 
% $\phi_n(t) \xrightarrow{\text{a.s.}} \Phi(t)$. Since $\phi_n(t)$ is bounded, it follows that 
% $\Phi_n(t) \doteq {\bb E}_{\bm X} \phi_n (t) \to \Phi(t)$.  
%For any $d$, clearly $\Cal{N}(0,\frac{1}{d}\langle f,\Sigma^3_n f\rangle_\Cal{H})$ converges weakly to $\Cal{N}(0,\frac{1}{d}\langle f,\Sigma^3  f\rangle_\Cal{H})$ almost surely for all $f\in\Cal{H}$ since the characteristic function of $\Cal{N}(0,\frac{1}{d}\langle f,\Sigma^3_n f\rangle_\Cal{H})$ given by 
%
% converges to the  almost surely for all $t\in\bb{R}$ and $f\in\Cal{H}$, which in turn follows from the almost sure convergence of $\langle f,\Sigma^3_n f\rangle_\Cal{H}$ to $\langle f,\Sigma^3 f\rangle_\Cal{H}$ for all $f\in\Cal{H}$ as $n\rightarrow\infty$.
%\end{proof}

\section{Supplementary Results}
\begin{appxpro}\label{pro:gauss}
Let $(\Cal{H},k)$ be an RKHS and $\mathcal{N}_{\mathcal{H}}(0,\Sigma)$ be a Gaussian measure on $\Cal{H}$ with covariance operator $\Sigma$. Define $\tilde{w} \doteq (w(x_1),\ldots,w(x_n))^\top$ where $w\sim \mathcal{N}_{\mathcal{H}}(0,\Sigma)$. Then $$\tilde{w}\sim \mathcal{N}(0,M)$$ where $M_{jl}=\langle K(\cdot,x_j),\Sigma K(\cdot,x_l)\rangle_\Cal{H}$.
\end{appxpro}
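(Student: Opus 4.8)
The plan is to reduce the statement to the defining property of a Gaussian measure on $\Cal H$ combined with the reproducing property, and then to read off the mean and covariance by direct computation. First I would observe that, by the reproducing property, each coordinate of $\tilde w$ is an inner product, $w(x_j) = \iprod{w, K(\cdot, x_j)}_\Cal{H}$. Hence for any $\bm a = (a_1, \ldots, a_n)^\top \in \bb R^n$ the linear combination $\iprod{\bm a, \tilde w}_2 = \sum_j a_j w(x_j) = \iprod{w, f_{\bm a}}_\Cal{H}$, where $f_{\bm a} \doteq \sum_j a_j K(\cdot, x_j) \in \Cal H$. By the definition of $\mathcal{N}_{\mathcal{H}}(0,\Sigma)$, the scalar $\iprod{w, f_{\bm a}}_\Cal{H}$ is Gaussian for every such element $f_{\bm a}$; since this holds for all $\bm a$, the Cram\'er--Wold device shows that $\tilde w$ is a multivariate Gaussian vector.

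It then remains to identify its mean and covariance. For the mean, since $w$ is centered ($\mu = \bb E w = 0$ in the Bochner sense), $\bb E[w(x_j)] = \bb E \iprod{w, K(\cdot, x_j)}_\Cal{H} = \iprod{\bb E w, K(\cdot, x_j)}_\Cal{H} = 0$, so $\tilde w$ has mean zero. For the covariance I would use the identity relating $\Sigma$ to second moments of the scalar projections: for any $f, g \in \Cal H$, $\bb E[\iprod{w, f}_\Cal{H}\iprod{w, g}_\Cal{H}] = \iprod{\Sigma f, g}_\Cal{H}$. This follows from $\Sigma = \bb E[w \otimes_\Cal{H} w]$ (as $\mu = 0$) together with $\iprod{(w \otimes_\Cal{H} w) f, g}_\Cal{H} = \iprod{f, w}_\Cal{H}\iprod{w, g}_\Cal{H}$. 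Taking $f = K(\cdot, x_j)$ and $g = K(\cdot, x_l)$ gives $\bb E[w(x_j) w(x_l)] = \iprod{\Sigma K(\cdot, x_j), K(\cdot, x_l)}_\Cal{H} = \iprod{K(\cdot, x_j), \Sigma K(\cdot, x_l)}_\Cal{H} = M_{jl}$, using self-adjointness of $\Sigma$. Combining the three facts yields $\tilde w \sim \mathcal{N}(0, M)$.

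The main point to handle carefully is the interchange of expectation with the continuous linear inner-product functionals, and the integrability underlying the second-moment identity. Both are secured by the standing requirement that the Gaussian measure is well-defined, i.e. that $\Sigma$ is trace-class: this gives $\bb E \Vert w \Vert_\Cal{H}^2 = \text{tr}(\Sigma) < \infty$, so that $w$ is Bochner integrable, $w \otimes_\Cal{H} w$ has finite expectation, and the exchanges $\bb E \iprod{w, f}_\Cal{H} = \iprod{\bb E w, f}_\Cal{H}$ and $\Sigma = \bb E[w \otimes_\Cal{H} w]$ are legitimate. The only genuinely structural step is the first one, recognizing that the joint (not merely marginal) Gaussianity of the finite collection $(w(x_1), \ldots, w(x_n))$ follows from applying the definition of a Gaussian measure to the single element $f_{\bm a}$ via Cram\'er--Wold; everything after that is a short computation.
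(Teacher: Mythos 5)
Your proof is correct and follows essentially the same route as the paper's: reduce each coordinate to $\iprod{w, K(\cdot,x_j)}_\Cal{H}$ via the reproducing property, invoke the defining property of $\mathcal{N}_\Cal{H}(0,\Sigma)$, and compute the covariance through $\Sigma=\bb{E}[w\otimes_\Cal{H} w]$. If anything, your version is slightly more complete, since the paper only checks marginal Gaussianity of each $w(x_i)$ plus the covariances, whereas you explicitly establish joint Gaussianity of the vector via Cram\'er--Wold applied to $f_{\bm a}=\sum_j a_j K(\cdot,x_j)$.
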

\begin{proof}
Since $w\sim \mathcal{N}_{\mathcal{H}}(0,\Sigma)$, we have for any $g\in\Cal{H}$, $\langle g, w\rangle_\Cal{H}\sim \mathcal{N}(0,\langle g,\Sigma g\rangle_\Cal{H})$. Choosing $g=K(\cdot,x_i)$, we obtain $$w(x_i)\sim \mathcal{N}(0,\langle K(\cdot,x_i),\Sigma K(\cdot,x_i)\rangle_\Cal{H})\,\,\text{for}\,\,i\in\{1,\ldots,n\}.$$ Similarly 
\begin{eqnarray}
\text{Cov}(w(x_i),w(x_j))&{}={}&\bb{E}_w\left[w(x_i)w(x_j)\right]=\bb{E}_w\left[\langle w,K(\cdot,x_i)\rangle_\Cal{H}\langle w,K(\cdot,x_j)\rangle_\Cal{H}\right]\nonumber\\
&{}={}&\bb{E}_w\langle K(\cdot,x_i),(w\otimes_\Cal{H} w)K(\cdot,x_j)\rangle_\Cal{H}=\langle K(\cdot,x_i),\Sigma K(\cdot,x_j)\rangle_\Cal{H}\nonumber
\end{eqnarray}
and the result follows.
\end{proof}
% 
% {\color{red} should be able to simplify the proof of this: for instance (ii), (iii), come directly from bounding 
% $\| \Sigma_n - \Sigma\|_{\text{op}}$, which itself is not a new result.} {\color{blue} Add perhaps a separate result 
% on  $\| \Sigma_n - \Sigma\|_{\text{op}}$, making it clear that it's well known, and which can also be referenced in 
% Proposition \ref{prop:convergenceinD}} above. 

\begin{appxlem}\label{lem:bern}
Suppose $\sup_{x\in\Cal{X}}K(x,x)\le\kappa$. %Define $s_\Sigma \doteq \frac{\emph{tr}(\Sigma)}{\Vert\Sigma\Vert_{\emph{op}}}$. 
Then %\vspace{2mm}\\
%\begin{itemize}
%\item[(i)] 
%(i) 
$\Vert K(\cdot,x)\otimes_{\Cal{H}} K(\cdot,x)\Vert_{\emph{op}}=K(x,x)\le \kappa$ for all $x\in\Cal{X}$.
%;\vspace{2mm}\\
%\item[(ii)] 
% (ii)For any $\alpha\ge 0$, $\tau>0$ and %$\tau\ge \frac{1}{2}-\log(2s_\Sigma)$ and 
% $n\ge \frac{8\kappa(\tau+\log(2s_\Sigma))}{9\Vert\Sigma\Vert_{\emph{op}}}$, 
% \begin{equation}
% \rho_X^n\left\{(X_1,\ldots,X_n):\Vert \Sigma^\alpha(\Sigma_n-\Sigma)\Vert_{\emph{op}}\ge \Vert\Sigma\Vert^\alpha_{\emph{op}}\sqrt{\frac{8\kappa\Vert\Sigma\Vert_{\emph{op}}(\tau+\log(2s_\Sigma))}{n}}\right\}\le 4 e^{-\tau};\nonumber
% %\label{Eq:conc-cov}
% \vspace{-2mm}
% \end{equation}
% %\item[(iii)] 
% (iii) For any $\tau> 0$ and %$\tau\ge \frac{1}{2}-\log(2s_{\Sigma^3})$ and 
% $n\ge \frac{8\kappa(\tau+\log(2s_{\Sigma^3}))}{9\Vert\Sigma\Vert_{\emph{op}}}$, 
% \begin{equation}
% \rho_X^n\left\{(X_1,\ldots,X_n):\Vert \Sigma(\Sigma_n-\Sigma)\Sigma\Vert_{\emph{op}}\ge \Vert\Sigma\Vert^2_{\emph{op}}\sqrt{\frac{8\kappa\Vert\Sigma\Vert_{\emph{op}}(\tau+\log(2s_{\Sigma^3}))}{n}}\right\}\le 4 e^{-\tau};\nonumber
% %\label{Eq:conc-cov}
% \vspace{-2mm}
% \end{equation}
% \begin{comment}
% \item[(iii)] For any $\tau\ge \frac{1}{2}-\log(2s_\Sigma)$ and 
% $n\ge \frac{8\kappa(\tau+\log(2s_\Sigma))}{9\Vert\Sigma\Vert_{\emph{op}}}$, 
% \begin{equation}
% \rho_X^n\left\{(X_1,\ldots,X_n):\Vert \Sigma_n\Sigma^2+\Sigma^2\Sigma_n-2\Sigma^3\Vert_{\emph{op}}\ge \Vert\Sigma\Vert^2_{\emph{op}}\sqrt{\frac{8\kappa\Vert\Sigma\Vert_{\emph{op}}(\tau+\log(2s_\Sigma))}{n}}\right\}\le 4 e^{-\tau}.
% \label{Eq:conc-cov}
% \end{equation}
% \end{comment}
% %\item[(iv)] 
%(ii) 
In addition, for any $\tau>0$ and $n\ge \frac{8\kappa\tau}{9\emph{tr}(\Sigma)}$, %, with probability at least $1-e^{-\tau}$,
\begin{equation}\rho_X^n\left\{(X_1,\ldots,X_n):\Vert \Sigma_n-\Sigma\Vert_{\Cal{L}_2(\Cal{H})}\ge 4\sqrt{\frac{2\kappa\emph{tr}(\Sigma)\tau}{n}}\right\}\le 2 e^{-\tau}.\nonumber
%\label{Eq:conc-cov}
\end{equation}
%\end{itemize}
\end{appxlem}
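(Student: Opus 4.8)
The plan is to treat the two assertions separately, the first being elementary and the second the real content. For the operator-norm claim, I would observe that $K(\cdot,x)\otimes_\Cal{H} K(\cdot,x)$ is a positive rank-one operator, and that any rank-one operator $f\otimes_\Cal{H} f$ has a single nonzero eigenvalue equal to $\Vert f\Vert_\Cal{H}^2$, so $\Vert f\otimes_\Cal{H} f\Vert_{\text{op}}=\Vert f\Vert_\Cal{H}^2$. Taking $f=K(\cdot,x)$ and applying the reproducing property $\Vert K(\cdot,x)\Vert_\Cal{H}^2=\iprod{K(\cdot,x),K(\cdot,x)}_\Cal{H}=K(x,x)$ gives $\Vert K(\cdot,x)\otimes_\Cal{H} K(\cdot,x)\Vert_{\text{op}}=K(x,x)\le\kappa$ by hypothesis.

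For the concentration bound, the strategy is to view $\Sigma_n-\Sigma=\frac1n\sum_{i=1}^n\xi_i$ as a normalized sum of i.i.d.\ \emph{centered} random elements $\xi_i\doteq K(\cdot,X_i)\otimes_\Cal{H} K(\cdot,X_i)-\Sigma$ of the separable Hilbert space $\hs$ equipped with the Hilbert--Schmidt inner product, for which $\bb E\,\xi_i=0$ by the definition of $\Sigma$ as a Bochner integral. I would then apply a Bernstein inequality for Hilbert-space-valued variables: if $\Vert\xi_i\Vert_{\hs}\le L$ a.s.\ and $\bb E\Vert\xi_i\Vert_{\hs}^2\le\sigma^2$, then with probability at least $1-2e^{-\tau}$ one has $\big\Vert\frac1n\sum_i\xi_i\big\Vert_{\hs}\le\frac{2L\tau}{n}+\sqrt{\frac{2\sigma^2\tau}{n}}$.

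The two inputs are then computed as follows. Since the Hilbert--Schmidt norm of a rank-one operator equals the squared norm of its generating vector, $\Vert K(\cdot,X_i)\otimes_\Cal{H} K(\cdot,X_i)\Vert_{\hs}=K(X_i,X_i)\le\kappa$; moreover $\text{tr}(\Sigma)=\bb E\,K(X,X)\le\kappa$ yields $\Vert\Sigma\Vert_{\hs}^2=\text{tr}(\Sigma^2)\le\Vert\Sigma\Vert_{\text{op}}\,\text{tr}(\Sigma)\le\kappa^2$, so the triangle inequality gives the a.s.\ bound $\Vert\xi_i\Vert_{\hs}\le 2\kappa=:L$. For the variance, $\bb E\Vert\xi_i\Vert_{\hs}^2\le\bb E\,K(X_i,X_i)^2\le\kappa\,\bb E\,K(X_i,X_i)=\kappa\,\text{tr}(\Sigma)=:\sigma^2$. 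Substituting $L=2\kappa$ and $\sigma^2=\kappa\,\text{tr}(\Sigma)$ gives, with probability at least $1-2e^{-\tau}$, $\Vert\Sigma_n-\Sigma\Vert_{\hs}\le\frac{4\kappa\tau}{n}+\sqrt{\frac{2\kappa\,\text{tr}(\Sigma)\tau}{n}}$. It remains to absorb the linear term: squaring shows that $\frac{4\kappa\tau}{n}\le 3\sqrt{\frac{2\kappa\,\text{tr}(\Sigma)\tau}{n}}$ is equivalent to $n\ge\frac{8\kappa\tau}{9\,\text{tr}(\Sigma)}$, which is exactly the stated hypothesis, so the sum is at most $4\sqrt{\frac{2\kappa\,\text{tr}(\Sigma)\tau}{n}}$, as claimed.

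The only genuinely delicate step is the invocation of the vector-valued Bernstein inequality. One must either verify the full Bernstein moment condition $\bb E\Vert\xi_i\Vert_{\hs}^m\le\frac12 m!\,\sigma^2 L^{m-2}$ for $m\ge2$ (which follows routinely by combining the a.s.\ bound $\Vert\xi_i\Vert_{\hs}\le L$ with the variance estimate) or cite a ready-made Hilbert-space version with matching constants; everything else is bookkeeping. A minor point worth checking first is measurability and Bochner-integrability of $\xi_i$ in $\hs$, which holds because $K$ is continuous and bounded, so that $\Sigma$ and the expectations above are well-defined.
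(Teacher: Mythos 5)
Your proposal is correct and follows essentially the same route as the paper: the rank-one operator-norm identity for the first claim, and for the second the centering $\xi_i = K(\cdot,X_i)\otimes_{\Cal H}K(\cdot,X_i)-\Sigma$ viewed in $\hs$, Bernstein's inequality in a separable Hilbert space with $B=2\kappa$ and $\theta^2=\kappa\,\mathrm{tr}(\Sigma)$, and absorption of the linear term $\tfrac{4\kappa\tau}{n}$ into the square-root term under $n\ge \tfrac{8\kappa\tau}{9\,\mathrm{tr}(\Sigma)}$. The moment-condition verification you flag as the delicate step is exactly what the paper does (bounding $\bb E\Vert\xi_i\Vert^m_{\hs}$ by the a.s.\ bound to the power $m-2$ times the second moment), so there is no gap.
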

\begin{proof}
Throughout the proof, we let $\otimes \doteq \otimes_\Cal{H}$ for ease of notation. It easily follows that
%\emph{(i)} 
\begin{align*}
\Vert K(\cdot,x)\otimes K(\cdot,x)\Vert_{\text{op}}=&\sup_{f\in\Cal{H}}\frac{\Vert K(\cdot,x)\otimes K(\cdot,x)f\Vert_\Cal{H}}{\Vert f\Vert_\Cal{H}}\\
=& \Vert K(\cdot,x)\Vert_\Cal{H}\sup_{f\in\Cal{H}}\frac{|f(x)|}{\Vert f\Vert_\Cal{H}}=\Vert K(\cdot,x)\Vert^2_\Cal{H}=K(x,x)\le\kappa.
\end{align*}
Define $\xi_i \doteq K(\cdot,X_i)\otimes K(\cdot,X_i)-\Sigma$. It is clear that $\xi_i$'s are i.i.d.~$\Cal{L}_2(\Cal{H})$-valued random variables and $\Cal{L}_2(\Cal{H})$ is a separable Hilbert space. The result follows from a straight forward application of Bernstein's inequality in Theorem~\ref{thm:bernstein-hs} by noting that
\begin{eqnarray}
\bb{E}\Vert\xi_i\Vert^m_{\Cal{L}_2(\Cal{H})}\le\sup_{x\in\Cal{X}}\Vert K(\cdot,x)\otimes K(\cdot,x)-\Sigma\Vert^{m-2}_{\Cal{L}_2(\Cal{H})}\bb{E}\Vert\xi_i\Vert^2_{\Cal{L}_2(\Cal{H})},\nonumber
\end{eqnarray}
where
% $$\sup_{x\in\Cal{X}}\Vert K(\cdot,x)\otimes_\Cal{H} K(\cdot,x)-\Sigma\Vert^{m-2}_{\Cal{L}_2(\Cal{H})}\le \sup_{x\in\Cal{X}}\left(\Vert K(\cdot,x)\otimes_\Cal{H} K(\cdot,x) \Vert_{\Cal{L}^2(\Cal{H})}+\Vert\Sigma\Vert_{\Cal{L}^2(\Cal{H})}\right)^{m-2}
% \stackrel{(*)}{\le} (2\kappa)^{m-2},$$
% and \begin{eqnarray}\b{E}\Vert\xi_i\Vert^2_{\Cal{L}_2(\Cal{H})}=\bb{E}\Vert K(\cdot,X_i)\otimes_\Cal{H} K(\cdot,X_i)-\Sigma\Vert^{2}_{\Cal{L}_2(\Cal{H})}&{}\le{}& 2\bb{E}\Vert K(\cdot,X_i)\otimes_\Cal{H} K(\cdot,X_i)\Vert^2_{\Cal{L}_2(\Cal{H})}+2\Vert\Sigma\Vert^2_{\Cal{L}_2(\Cal{H})}\nonumber\\
% &{}={}& 2\int_\Cal{X}K^2(x,x)\,d\rho_X(x)+2\Vert\Sigma\Vert^2_{\Cal{L}_2(\Cal{H})}\nonumber\\
% &{}\stackrel{(\dagger)}{\le}{}&2\kappa\text{tr}(\Sigma)+2\kappa\text{tr}(\Sigma)=4\kappa\text{tr}(\Sigma),\nonumber
% \end{eqnarray}
% where $(*)$ and $(\dagger)$ follow from \emph{(i)} and \emph{(ii)}. Therefore, \eqref{Eq:bern-main} is satisfied and we obtain that with probability at least $1-2e^{-\tau}$, $$\Vert \Sigma_n-\Sigma\Vert_{\Cal{L}_2(\Cal{H})}\le \frac{4\kappa\tau}{n}+\sqrt{\frac{8\kappa\text{tr}(\Sigma)\tau}{n}}\le 4\sqrt{\frac{2\kappa\text{tr}(\Sigma)\tau}{n}}$$ where the last inequality follows from $n\ge \frac{2\kappa\tau}{\text{tr}(\Sigma)}$.\vspace{2mm}\\
% \textcolor{red}{ALTERNATE BOUND (change in the statement and everywhere else later):
\begin{eqnarray}
\sup_{x\in\Cal{X}}\Vert  K(\cdot,x)\otimes K(\cdot,x)-\Sigma\Vert^{m-2}_{\Cal{L}_2(\Cal{H})}&{}\le{}& \sup_{x\in\Cal{X}}\left(\Vert K(\cdot,x)\otimes K(\cdot,x) \Vert_{\Cal{L}_2(\Cal{H})}+\Vert\Sigma\Vert_{\Cal{L}_2(\Cal{H})}\right)^{m-2}
\le(2\kappa)^{m-2},\nonumber
\end{eqnarray}
and \begin{equation}
\bb{E}\Vert\xi_i\Vert^2_{\Cal{L}_2(\Cal{H})}=\bb{E}\Vert K(\cdot,X_i)\otimes K(\cdot,X_i)-\Sigma\Vert^{2}_{\Cal{L}_2(\Cal{H})}
\le \bb{E}\Vert  K(\cdot,X_i)\otimes K(\cdot,X_i)\Vert^2_{\Cal{L}_2(\Cal{H})}\le \kappa\text{tr}(\Sigma).\nonumber
\end{equation}
%where $(*)$ and $(\dagger)$ follow from \emph{(i)} and \emph{(ii)}.
The result follows by using $B=2\kappa$ and $\theta^2 \doteq \kappa\text{tr}(\Sigma)$ in Theorem~\ref{thm:bernstein-hs}.
%Therefore, \eqref{Eq:bern-main} is satisfied and we obtain that with probability at least $1-2e^{-\tau}$, $$\Vert\Sigma^{\alpha}( \Sigma_n-\Sigma)\Vert_{\Cal{L}_2(\Cal{H})}\le \frac{4\kappa\tau}{n}+\sqrt{\frac{2\kappa\text{tr}(\Sigma)\tau}{n}}\le 4\sqrt{\frac{2\kappa\text{tr}(\Sigma)\tau}{n}}$$ where the last inequality follows from $n\ge \frac{8\kappa\tau}{9\text{tr}(\Sigma)}$.
\end{proof}
\begin{appxlem}\label{lem:hatU}
For any $\tau\ge 1$ and $d\ge \left(\frac{\emph{tr}(\Sigma^3_n)}{\Vert \Sigma^3_n\Vert_{\emph{op}}}\vee \tau\right)$, with probability at least $1-e^{-\tau}$ over the choice of $(\hat{u}_i)^d_{i=1}$ conditioned on $\bm{X}$, we have
\begin{equation}
\left\Vert \sxs \hat{V}\hat{V}^\top\sx-\frac{1}{n^3}\sxs\bm{K}^2\sx\right\Vert_{\emph{op}}\le\frak{C} \left(\sqrt{\frac{\emph{tr}(\Sigma^3_n)\Vert \Sigma^3_n\Vert_{\emph{op}}}{d}}+ \Vert \Sigma^3_n\Vert_{\emph{op}}\sqrt{\frac{\tau}{d}}\right),\nonumber
\end{equation}
where $\frak{C}$ is a universal constant independent of $\bm{K}$ and $d$.
\end{appxlem}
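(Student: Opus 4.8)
The plan is to recognize $\sxs\hat V\hat V^\top\sx$ as the empirical second-moment operator of $d$ i.i.d.\ centered Gaussian elements of $\Cal H$ whose common covariance is exactly $\Sigma^3_n$, and then to invoke the concentration result Theorem~\ref{Thm:kL} (the same tool used for Proposition~\ref{prop:Thetaconvergence}) conditionally on $\bm X$. Throughout I read the $(\hat u_i)_{i=1}^d$ in the statement as the $(\hat v_i)_{i=1}^d$ that build $\hat V$.

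First I would unfold $\hat V=\frac{1}{n\sqrt d}[\hat v_1,\dots,\hat v_d]$ and push the sampling operators inside the sum. Setting $\zeta_i\doteq\frac1n\sxs\hat v_i\in\Cal H$, the elementary identity $\sxs\hat v_i\hat v_i^\top\sx=(\sxs\hat v_i)\otimes_{\Cal H}(\sxs\hat v_i)$ gives
\begin{equation}
\sxs\hat V\hat V^\top\sx=\frac{1}{n^2 d}\sum_{i=1}^d(\sxs\hat v_i)\otimes_{\Cal H}(\sxs\hat v_i)=\frac1d\sum_{i=1}^d\zeta_i\otimes_{\Cal H}\zeta_i.\nonumber
\end{equation}
For the centering term, using $\bm K=\sx\sxs$ and $n\Sigma_n=\sxs\sx$ I would simplify $\frac1{n^3}\sxs\bm K^2\sx=\frac1{n^3}(\sxs\sx)^3=\Sigma^3_n$, so that the quantity to bound is exactly $\Vert\frac1d\sum_{i}\zeta_i\otimes_{\Cal H}\zeta_i-\Sigma^3_n\Vert_{\text{op}}$.

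Next I would pin down the distribution of the $\zeta_i$, working conditionally on $\bm X$ so that $\bm K$, and hence $\Sigma_n$, are fixed. Since $\hat v_i\stackrel{i.i.d.}{\sim}\Cal N(0,\frac1n\bm K^2)$ and $\sxs$ is linear, each $\zeta_i$ is a centered Gaussian element of $\Cal H$ supported on the finite-dimensional range of $\sxs$; its covariance operator is thus trace-class and equals $\frac1{n^2}\sxs(\frac1n\bm K^2)\sx=\Sigma^3_n$. Hence $\zeta_i\stackrel{i.i.d.}{\sim}\Cal N_{\Cal H}(0,\Sigma^3_n)$ and $\bb E[\zeta_i\otimes_{\Cal H}\zeta_i]=\Sigma^3_n$, i.e.\ $\frac1d\sum_i\zeta_i\otimes_{\Cal H}\zeta_i$ is an unbiased empirical covariance of $\Sigma^3_n$.

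Finally I would apply Theorem~\ref{Thm:kL} with covariance $\Cal C=\Sigma^3_n$: its effective dimension $\text{tr}(\Cal C)/\Vert\Cal C\Vert_{\text{op}}=\text{tr}(\Sigma^3_n)/\Vert\Sigma^3_n\Vert_{\text{op}}$ is precisely what the hypothesis $d\ge(\text{tr}(\Sigma^3_n)/\Vert\Sigma^3_n\Vert_{\text{op}})\vee\tau$ controls, and the theorem yields the claimed bound with probability at least $1-e^{-\tau}$ (conditionally on $\bm X$; no unconditioning is needed, as the lemma is itself conditional). I do not expect a genuine obstacle here: the work is essentially the operator bookkeeping of the first two steps---confirming that the estimator is centered at $\Sigma^3_n$ exactly rather than at a nearby operator, and that each $\zeta_i$ is a legitimate Gaussian element with trace-class covariance so that Theorem~\ref{Thm:kL} applies verbatim---both of which are immediate because the $\zeta_i$ are supported in a finite-dimensional subspace.
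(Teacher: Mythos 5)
Your proposal is correct and follows essentially the same route as the paper: identify $\frac{1}{n^3}\sxs\bm{K}^2\sx$ with $\Sigma^3_n$ via $\bm{K}=\sx\sxs$ and $n\Sigma_n=\sxs\sx$, recognize $\sxs\hat V\hat V^\top\sx$ as an empirical covariance of i.i.d.\ centered Gaussians in $\Cal H$ with covariance $\Sigma^3_n$, and apply Theorem~\ref{Thm:kL} conditionally on $\bm X$ (using $r(\Sigma^3_n)\le\mathrm{tr}(\Sigma^3_n)/\Vert\Sigma^3_n\Vert_{\mathrm{op}}$ to match the hypothesis on $d$). If anything, your explicit verification that the elements $\zeta_i=\frac1n\sxs\hat v_i$ are genuinely $\Cal N_{\Cal H}(0,\Sigma^3_n)$ is slightly more careful than the paper's one-line expectation computation.
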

\begin{proof}
Note that $\bb{E}\left(\sxs\hat{V}\hat{V}^\top \sx\right)=\sxs\bb{E}(\hat{V}\hat{V}^\top)\sx=\frac{1}{n^2d}\sxs\sum^d_{i=1}\bb{E}(\hat{v}_i\hat{v}^\top_i)\sx=\frac{1}{n^2}\sxs\bb{E}(\hat{v}_1\hat{v}^\top_1)\sx=\frac{1}{n^3}\sxs\bm{K}^2\sx=\frac{1}{n^3}\sxs\sx\sxs\sx\sxs\sx=\Sigma^3_n$ where we used the facts that $\bm{K}=\sx\sxs$ and $\Sigma_n=\frac{1}{n}\sxs\sx$. Therefore conditioning on $\bm{X}$, it follows from Theorem~\ref{Thm:kL} that for all $\tau\ge 1$ and $d\ge (r(\Sigma^3_n)\vee \tau)$, with probability $1-e^{-\tau}$ over the choice of $(\hat{v}_i)^d_{i=1}$, we obtain
\begin{eqnarray}
\left\Vert \sxs \hat{V}\hat{V}^\top\sx -\frac{1}{n^3}\sxs\bm{K}^2 \sx\right\Vert_{\text{op}}&{}\le{}&\mathfrak{C}
 \frac{1}{n^2}\Vert \Sigma^3_n\Vert_{\text{op}}\left(\sqrt{\frac{r(\Sigma^3_n)}{d}}+ \sqrt{\frac{\tau}{d}}\right)\nonumber,%\nonumber
\label{Eq:hat-term}
\end{eqnarray}
where
$r(\Sigma^3_n)\le\frac{\text{tr}(\Sigma^3_n)}{\Vert \Sigma^3_n\Vert_{\text{op}}}.$ %The result follows by employing Lemma~\ref{lem:tilde-f}\emph{(iv)} for $\sxs AA^\top \sx$. %yields that for any $\tau\ge 1$ and $d\ge \left(\frac{\text{Tr}(\Sigma_n\tilde{\Sigma}_m\Sigma_n)}{\Vert \Sigma_n\tilde{\Sigma}_m\Sigma_n\Vert_{\text{op}}}\vee \tau\right)$, with probability at least $1-e^{-\tau}$ over the choice of $(\hat{u}_i)^d_{i=1}$ conditioned on $\bX$, we have
%\begin{equation}
%\left\Vert \sxs \hat{V}\hat{V}^\top\sx-\frac{1}{n^2}\sxs AA^\top\sx\right\Vert_{\text{op}}\le\frak{C} \left(\sqrt{\frac{\text{Tr}(\Sigma_n\tilde{\Sigma}_m\Sigma_n)\Vert \Sigma_n\tilde{\Sigma}_m\Sigma_n\Vert_{\text{op}}}{d}}+ \Vert \Sigma_n\tilde{\Sigma}_m\Sigma_n\Vert_{\text{op}}\sqrt{\frac{\tau}{d}}\right).\nonumber
%\end{equation}
\end{proof}
\section{Known Technical Tools}\label{sec:additional}
%The following result is from Koltchinskii and Lounici (2017). 
\begin{appxthm}[\citealp{Koltchinskii-17}]\label{Thm:kL}
Let $X_1,\ldots,X_n$ be i.i.d.~centered Gaussian random variables in a separable Hilbert space $H$ with covariance operator $\Sigma=\bb{E}[X\otimes_H X]$. Let $\hat{\Sigma}=\frac{1}{n}\sum_{i=1}X_i\otimes_H X_i$ be the empirical covariance operator. Define $$r(\Sigma) \doteq \frac{\left(\bb{E}\Vert X\Vert_{H}\right)^2}{\Vert \Sigma\Vert_{\emph{op}}}.$$ Then for all $\tau\ge 1$ and $n\ge (r(\Sigma)\vee \tau)$, with probability at least $1-e^{-\tau}$,
$$\Vert \hat{\Sigma}-\Sigma\Vert_\emph{op}\le \frak{C}\Vert \Sigma\Vert_{\emph{op}}\frac{\sqrt{r(\Sigma)}+\sqrt{\tau}}{\sqrt{n}},$$
%\sqrt{\frac{r(\Sigma)}{n}}+\frak{C}\Vert\Sigma\Vert_{\emph{op}}\sqrt{\frac{\tau}{n}},$$
where $\frak{C}$ is a universal constant independent of $\Sigma$, $\tau$ and $n$.
\end{appxthm}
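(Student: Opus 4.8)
The final statement is the Koltchinskii--Lounici operator-norm bound for the sample covariance operator of a Gaussian random element, so the plan is to reconstruct that argument in two stages---control the expectation $\mathbb{E}\Vert\hat\Sigma-\Sigma\Vert_{\text{op}}$, then concentrate around it---and finally simplify in the regime $n\ge r(\Sigma)\vee\tau$. First I would reduce the operator norm to a Gaussian quadratic empirical process: since $\hat\Sigma-\Sigma$ is self-adjoint,
\[
\Vert\hat\Sigma-\Sigma\Vert_{\text{op}}=\sup_{\Vert u\Vert_H=1}\Big|\tfrac1n\sum_{i=1}^n\big(\langle X_i,u\rangle_H^2-\mathbb{E}\langle X_i,u\rangle_H^2\big)\Big|.
\]
Expanding each $X_i=\sum_k\sqrt{\lambda_k}\,g_{ik}\phi_k$ through the Karhunen--Lo\`eve representation ($\{\phi_k\},\{\lambda_k\}$ the eigenbasis and eigenvalues of $\Sigma$, and $(g_{ik})$ i.i.d. standard normals) realizes $\Vert\hat\Sigma-\Sigma\Vert_{\text{op}}$ as an explicit function $F$ of a standard Gaussian array, which is what makes both stages tractable.

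For the expectation I would symmetrize and invoke a dimension-free chaining bound. Each scalar map $u\mapsto\langle X_i,u\rangle_H$ is Gaussian with variance $\langle\Sigma u,u\rangle_H\le\Vert\Sigma\Vert_{\text{op}}$, so $\langle X_i,u\rangle_H^2$ is sub-exponential and the increments of the centered process obey a mixed sub-Gaussian/sub-exponential (Bernstein-type) bound. The crucial point---and the reason the noncompact infinite-dimensional sphere causes no trouble---is that the relevant metric entropy is governed entirely by the spectrum $\{\lambda_k\}$, so a generic-chaining (or Dudley) estimate yields $\mathbb{E}F\le C\Vert\Sigma\Vert_{\text{op}}\big(\sqrt{r(\Sigma)/n}\vee r(\Sigma)/n\big)$. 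Carrying $\mathbb{E}\Vert X\Vert_H$ itself through the Gaussian-complexity term, rather than the cruder $\sqrt{\mathrm{tr}(\Sigma)}$, is what produces the sharper effective rank $r(\Sigma)=(\mathbb{E}\Vert X\Vert_H)^2/\Vert\Sigma\Vert_{\text{op}}\le\mathrm{tr}(\Sigma)/\Vert\Sigma\Vert_{\text{op}}$. Equivalently, one may apply a noncommutative Bernstein inequality to the i.i.d. self-adjoint summands $X_i\otimes_H X_i-\Sigma$, whose variance proxy $\Vert\mathbb{E}[(X\otimes_H X-\Sigma)^2]\Vert_{\text{op}}$ is of order $\Vert\Sigma\Vert_{\text{op}}\,\mathrm{tr}(\Sigma)$.

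For concentration I would regard $F$ as a function of the standard Gaussian array and apply the Gaussian isoperimetric (Borell--Sudakov--Tsirelson) inequality. Because the operator norm is not globally Lipschitz with a favorable constant, the step hinges on the \emph{local} Lipschitz constant: differentiating at a maximizing direction $u^\star$ gives $\sum_{i,k}(\partial F/\partial g_{ik})^2\le\frac{4\Vert\Sigma\Vert_{\text{op}}}{n}\cdot\frac1n\sum_i\langle X_i,u^\star\rangle_H^2\le\frac{4\Vert\Sigma\Vert_{\text{op}}}{n}\big(\Vert\Sigma\Vert_{\text{op}}+F\big)$, i.e. a Lipschitz constant $\ell\lesssim\sqrt{\Vert\Sigma\Vert_{\text{op}}/n}\cdot\sqrt{\Vert\Sigma\Vert_{\text{op}}+F}$. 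Feeding this self-referential bound into Gaussian concentration and solving the resulting quadratic inequality gives a Bernstein-type tail $\mathbb{P}(F\ge\mathbb{E}F+t)\le\exp\!\big(-c\,n\,t^2/[\Vert\Sigma\Vert_{\text{op}}(\Vert\Sigma\Vert_{\text{op}}+t)]\big)$. Setting the right-hand side to $e^{-\tau}$ yields $t\lesssim\Vert\Sigma\Vert_{\text{op}}(\sqrt{\tau/n}\vee\tau/n)$; adding this to the expectation bound and using $n\ge r(\Sigma)\vee\tau$ to absorb the linear-in-$1/n$ terms $r(\Sigma)/n$ and $\tau/n$ into their square-root counterparts produces exactly $\mathfrak{C}\Vert\Sigma\Vert_{\text{op}}(\sqrt{r(\Sigma)}+\sqrt{\tau})/\sqrt{n}$.

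I expect the main obstacle to be the expectation bound of the second paragraph: controlling a Gaussian quadratic empirical process over the unit sphere of an infinite-dimensional space, where no finite $\epsilon$-net exists and a naive union bound is hopeless. Its resolution is purely spectral---the effective rank $r(\Sigma)$ measures how the decay of $\{\lambda_k\}$ tames the metric entropy---and extracting the sharp power of $r(\Sigma)$ (with no ambient-dimension factor) is precisely the content of the Koltchinskii--Lounici analysis. A secondary difficulty is the lack of a global Lipschitz bound in the concentration step, which the self-bounding argument above circumvents. Since the statement is quoted verbatim from \citet{Koltchinskii-17}, in the paper itself I would simply cite their theorem; the sketch above records the route by which one would reprove it.
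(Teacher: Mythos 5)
Your proposal is correct and takes essentially the same approach as the paper: Theorem~\ref{Thm:kL} appears there only as a quoted known tool, with no proof beyond the citation to \citet{Koltchinskii-17}, which is exactly what you recommend, and your reconstruction (effective-rank expectation bound plus Gaussian isoperimetric concentration via the self-bounding local Lipschitz estimate) faithfully tracks the Koltchinskii--Lounici argument. One minor caveat: your parenthetical claim that a noncommutative Bernstein inequality gives an equivalent route is not quite accurate---dimension-free matrix Bernstein bounds carry residual logarithmic (intrinsic-dimension) factors and would not recover the sharp $\sqrt{r(\Sigma)/n}$ rate, which is precisely what is special about the Gaussian chaining/comparison analysis---but since this is an aside, it does not affect the verdict.
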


%
%    {\color{blue} Is it possible that the above is less tight than the Pinelis bound below? 
    
%    Clearly $r(\Sigma) \leq \theta^2/\|\Sigma\|_{op}$ 
%    (since $\theta^2$ bounds the variance, setting $m = 2$), 
%    but we have the $1/n$ term, implying faster convergence whenever $\theta^2$ is small. 
    
    %{\bf - Also, do we ever need the Bernstein form, or is a Chernoff form enough? 
    
    %Note that we are not distinguishing much between $\text{trace}(\Sigma)$ and $\|\Sigma\|_{op}$ ... 
    %MAYBE WE SHOULD? 
    %}
    
    %\vspace{0.1cm}
    %}
    
    %\begin{comment}
The following is Bernstein's inequality in separable Hilbert spaces which is quoted from \cite[Theorem 3.3.4]{yurinsky95sums}.%Yurinsky (1995, Theorem 3.3.4).
%{\color{red} due to Pinelis?}

\begin{appxthm}[Bernstein's inequality]\label{thm:bernstein-hs}
Let $(\Omega,\mathcal{A},P)$ be a probability space, $H$ be a separable Hilbert space, $B>0$ and $\theta>0$. Furthermore, let $\xi_1,\ldots,\xi_n:\Omega\rightarrow H$ be zero mean i.i.d.~random variables satisfying 
\begin{equation}\bb{E}\Vert \xi_1\Vert^m_{H}\le \frac{m!}{2}\theta^2B^{m-2},\,\,\forall\,\,m>2.\nonumber
%\label{Eq:bern-main}
\end{equation}
Then for any $\tau>0$,
$$P^n\left\{(\xi_1,\ldots,\xi_n):\left\Vert\frac{1}{n} \sum^n_{i=1}\xi_i\right\Vert_H\ge \frac{2B\tau}{n}+\sqrt{\frac{2\theta^2\tau}{n}}\right\}\le 2 e^{-\tau}.$$
\end{appxthm}
%\end{comment}
    
% The following is a special case of Theorem 7.3.1 from \cite{Tropp-15}.
% \begin{appxthm}[Bernstein's inequality for operator norm]\label{thm:bernstein}
% Let $(\Omega,\mathcal{A},P)$ be a probability space, $H$ be a separable Hilbert space and $B>0$. Furthermore, let $\xi_1,\ldots,\xi_n:\Omega\rightarrow \Cal{B}(H)$ be zero mean i.i.d.~random variables where $\Cal{B}(H)$ denotes the Banach space of bounded operators on $H$. Suppose $\sup_{\omega\in\Omega}\Vert \xi_i(\omega)\Vert_{\emph{op}}\le B$, $\bb{E}(\xi_i\xi^*_i)\preceq T_1$ and $\bb{E}(\xi^*_i\xi_i)\preceq T_2$ for all $i\in [n]$, where $T_1$ and $T_2$ are trace-class operators on $H$. Define $$r(T_1,T_2) \doteq \frac{\emph{tr}(T_1)+\emph{tr}(T_2)}{v(T_1,T_2)},$$
% where $v(T_1,T_2) \doteq \max\{\Vert T_1\Vert_{\emph{op}},\Vert T_2\Vert_{\emph{op}}\}$.
% Then for any $\tau\ge \frac{1}{2}-\log r(T_1,T_2)$,
% $$P^n\left\{(\xi_1,\ldots,\xi_n):\left\Vert\frac{1}{n} \sum^n_{i=1}\xi_i\right\Vert_{\emph{op}}\ge \frac{2B\tau'}{3n}+\sqrt{\frac{2v\tau'}{n}}\right\}\le 4 e^{-\tau},$$
% where $\tau' \doteq \tau+\log r(T_1,T_2)$.
% \end{appxthm}
\end{document}